\documentclass[11pt,letterpaper]{article}

\usepackage{deepthink}

\usepackage[utf8]{inputenc}
\usepackage[T1]{fontenc}
\usepackage[numbers, sort&compress]{natbib}
\usepackage{url}
\usepackage{booktabs}
\usepackage{amsmath}
\usepackage{amsfonts}
\usepackage{nicefrac}
\usepackage{bm}
\usepackage{wrapfig}
\usepackage{listings}
\usepackage{caption}
\usepackage{subcaption}
\usepackage{multirow}
\usepackage{algorithm}
\usepackage{algpseudocode}
\usepackage[normalem]{ulem}

\usepackage{amssymb, amsthm}
\usepackage{cancel}
\usepackage{tikz}
\usepackage[nameinlink]{cleveref}   
\tcbuselibrary{breakable}

\graphicspath{{figs/}}

\usepackage{amsmath,amssymb,amsthm}

\theoremstyle{definition}

\newcommand{\R}{\mathbb{R}}

\newcommand{\E}{\mathbb{E}}

\newcommand{\norm}[1]{\left\lVert#1\right\rVert}

\definecolor{commentgreen}{RGB}{0, 128, 128}
\definecolor{codeblack}{RGB}{0, 0, 0}
\definecolor{brightviolet}{RGB}{170, 0, 255}      
\definecolor{violetcomment}{RGB}{200, 80, 255}    

\makeatletter
\newcommand{\algrule}[1][.4pt]{\par\vskip.5\baselineskip\hrule height #1\par\vskip.5\baselineskip}
\makeatother

\newtheorem{condition}{Condition}
\newtheorem{proposition}{Proposition}

\title{Data-Forcing Distillation: Restoring Diversity and Fidelity in Few-Step Video Generation}

\authorblock{
  \href{https://csy2077.github.io/siyichen.github.io/}{\textbf{Siyi Chen}}\textsuperscript{1},
  \href{https://stevenlsw.github.io}{\textbf{Shaowei Liu}}\textsuperscript{2,3},
  \href{https://umjiayx.github.io}{\textbf{Yixuan Jia}}\textsuperscript{1},
  \href{https://www.cs.toronto.edu/~zianwang/}{\textbf{Zian Wang}}\textsuperscript{2},
  \href{https://www.cs.toronto.edu/~linghuan/}{\textbf{Huan Ling}}\textsuperscript{2},
  \href{https://qingqu.engin.umich.edu}{\textbf{Qing Qu}}\textsuperscript{1},
  \href{http://www.cs.toronto.edu/~jungao/}{\textbf{Jun Gao}}\textsuperscript{1,2}
}

\affiliation{
  \textsuperscript{1}University of Michigan \quad $\cdot$ \quad
  \textsuperscript{2}NVIDIA \quad $\cdot$ \quad
  \textsuperscript{3}University of Illinois Urbana-Champaign
}

\abstracttext{
\noindent Recent progress has shown promise in distilling multi-step video diffusion models into efficient few-step students.
Among them, Distribution Matching Distillation (DMD) and its successor DMD2 achieved strong generation quality and fast convergence.
However, due to the nature of the reverse Kullback--Leibler (KL) objective, these methods exhibit two persistent failure modes: a substantial drop in sample diversity, and visibly over-saturated outputs that deviate from real-video appearance.
In this work, we propose \textit{Data-Forcing Distillation} (DFD), a simple post-training framework that restores diversity and fidelity in DMD \textit{with only a single-line of code change}.
At its core is the \emph{teacher score discrepancy} to guide the student toward the real-data distribution, pulling it to missing modes (mitigating mode collapse) and away from problematic modes absent in real data (avoiding over-saturation).
We provide an in-depth theoretical analysis of our framework and validate our approach on both text-to-video and image-to-video generation.
With \emph{only 100--300 steps} of finetuning, DFD effectively restores diversity and fidelity on both Wan2.1-1.3B and Cosmos-Predict2.5-2B model, resolving the over-saturation artifacts with significantly better video dynamics and appearance, and even outperforms the teacher model. We refer the readers to the supplementary material for full video comparisons.
}

\keywords{Video generation, Diffusion distillation, Distribution matching, Mode collapse}

\date{\today}
\correspondence{\href{mailto:siyiche@umich.edu}{siyiche@umich.edu}}
\resources{\href{https://csy2077.github.io/dfd-project-page/}{Project Page}}

\DeepthinkSetHeaderFooterSep{0.15em}{0.20em}

\headerlogo{Deepthink_landscape_do_not_delete_compressed.png}{https://deepthink-umich.github.io}

\begin{document}

\makeDeepthinkHeader
\enlargethispage{0.9in}
\vspace{-0.18in}

\begingroup
\centering
\setlength{\abovecaptionskip}{4pt}
\includegraphics[width=0.96\textwidth]{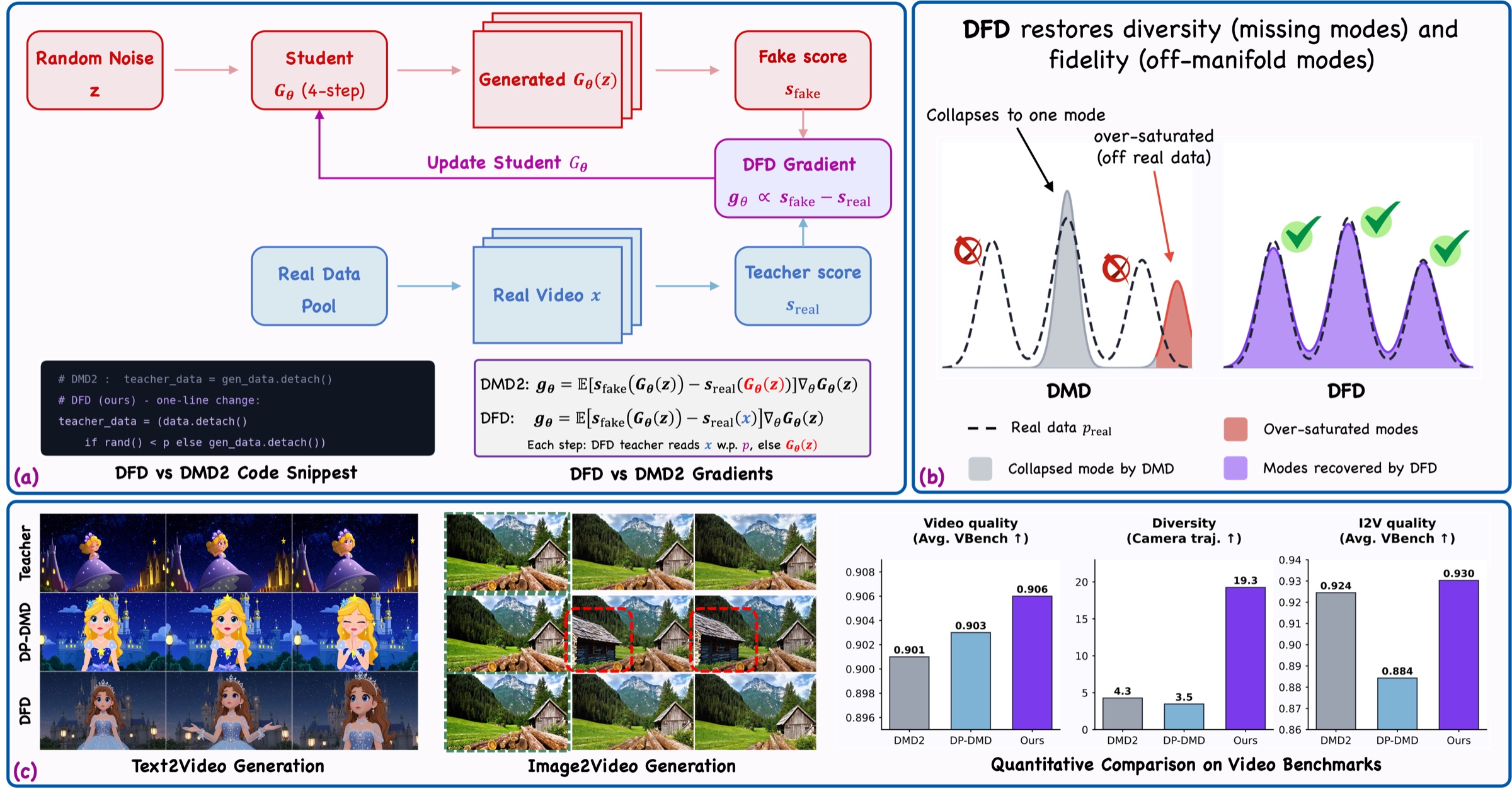}
\captionof{figure}{\textbf{Overview of DFD.} (a) DFD is a one-line change to DMD2, where the teacher scores a real video $x$ (w.p. $p$) instead of the student's generation $G_\theta(z)$, giving the update $g_\theta\propto s_\text{fake}-s_\text{real}$. (b) This counters reverse-KL mode-seeking, recovering collapsed (diversity) and over-saturated off-manifold (fidelity) modes. (c) DFD improves quality and substantially increases diversity on text-to-video and image-to-video benchmarks.}
\label{fig:teaser}
\par
\endgroup
\vspace{0.05in}

\tableofcontents
\newpage




\section{Introduction}
\label{sec:introduction}


Recent progress in large-scale diffusion and flow-based models has substantially advanced video generation~\cite{song2020score, lipman2022flow, rombach2022high, wan2025wan, cosmos-predict2p5, kong2024hunyuanvideo, yang2024cogvideox, brooks2024video}. Modern video generators can synthesize photorealistic, high-fidelity, and diverse videos from a text prompt or a single image.
Yet, this capability comes at a heavy computational cost: high-quality generation typically requires multiple denoising steps, resulting in high inference latency and limited deployment in interactive, real-time, or large-scale production settings.
Accelerating diffusion sampling while preserving the quality and diversity of the original multi-step model has thus become a central research challenge.

Two families of distillation methods have emerged to address this challenge. \emph{Trajectory-based} methods train the few-step student to regress the sampling trajectory of a multi-step teacher~\cite{salimans2022progressive,meng2023distillation,song2023consistency,song2023improved,lu2024simplifying,geng2025mean,sabour2025align}. While effective at reducing sampling steps, these methods require the student to approximate a complex multi-step trajectory with only a few function evaluations, which becomes challenging for high-dimensional video generation and can result in noticeable quality degradation.
\emph{Distribution-based} methods~\cite{yin2024one,yin2024improved,luo2023diff,wang2025uni} relax this trajectory constraint and train the student so that its output distribution matches the teacher's distribution. DMD~\cite{yin2024one} and its successor DMD2~\cite{yin2024improved} are representative examples, achieving strong few-step quality and fast convergence; we therefore build on this family in our work.
However, when applied to large-scale video generation, distribution-based methods still exhibit two critical failure modes: a substantial drop in sample diversity compared to the teacher model, and visibly over-saturated outputs that deviate from real-video appearance.


The root cause of both failure modes is the reverse Kullback-Leibler (KL) objective used in DMD, which is well known to be \emph{mode-seeking}~\cite{poole2022dreamfusion,wang2023prolificdreamer}: the student seeks the highest-density modes of $p_{\mathrm{real}}$, as the reverse KL takes its expectation over student-generated samples (Sec.~\ref{sec:dmd_formulation}) and primarily optimizes only the modes that the student covers and cannot penalize missing modes in the real distribution. 
To fix the problem, our intuition is simple: directly incorporating real video samples into the training objective during distillation, so that the student can be explicitly pulled toward the diverse, high-quality real data distribution rather than supervising itself using its own generated data.

Building on this motivation, we propose \textit{Data-Forcing Distillation} (DFD), a video distillation framework that restores both diversity and fidelity with a single line of code change on top of DMD. The core of DFD is the \emph{teacher score discrepancy} that computes the score difference between a real video and the student's generated video, and we directly incorporate it in the distillation objective.
Intuitively, this score discrepancy points the student to the real-data distribution, pulling it toward missing modes (mitigating mode collapse) and away from problematic modes absent in real data (avoiding over-saturation). 
In practice, DFD is implemented with a single-line code change: feeding a real video sample to the teacher in place of the student's own generation when computing the distribution matching objective.
While the idea of leveraging real data was already explored in DMD2~\cite{yin2024improved}, it was implemented through an auxiliary GAN~\cite{goodfellow2014generative}, which destabilizes training and only \emph{implicitly} transfers real-data information to the generator through a discriminator. In contrast, DFD \emph{explicitly} provides real-data guidance to the distribution matching objective, preserving the efficiency and simplicity of DMD while mitigating both failure modes.

We conduct extensive experiments on two different video generation tasks with different pretrained video models to demonstrate the effectiveness of DFD. 
On text-to-video generation, we distill the pretrained Wan2.1-1.3B~\cite{wan2025wan} model into a 4-step student model.
On image-to-video generation, we apply the same recipe and distill the pretrained Cosmos-Predict2.5-2B~\cite{cosmos-predict2p5} model. 
With only 50-100 steps of post-training on top of DMD2, our DFD consistently and significantly improves visual quality, restores diversity across scene composition and camera motion quality of DMD2, and substantially alleviates over-saturation artifacts. 
We will release our code and model checkpoints to support future research on efficient, high-fidelity, and diverse video generation.

\section{Related Work}
\label{sec:relatedworkd}
\subsection{Trajectory-Based Methods}
Trajectory-based distillation methods aim to accelerate the sampling process by training a few-step student model to match the denoising trajectory of a multi-step teacher model \cite{song2023consistency, luo2023latent, meng2023distillation, li2024t2v}. Among these, Consistency Models (CMs) \cite{song2023consistency, song2023improved, geng2024consistency, geng2025mean, sabour2025align, kim2023consistency} learns a consistency function $f_{\bm \theta}(\bm x_t, t) \to \bm x_0$ 
that directly maps any point $\bm x_t$ on the teacher's probability flow ODE trajectory to the initial data point $\bm x_0$.
Initially, discrete-time CMs were trained by minimizing the discrepancy between outputs at adjacent anchor points $t$ and $t-\Delta t$ along the teacher trajectory. 
Recently, continuous-time CMs
\cite{lu2024simplifying,kim2023consistency} offer a clean upgrade by taking the limit as $\Delta t \to 0$, simplifying the objective to enforce instantaneous self-consistency. 
Building upon these foundations,  a notable work is the rCM \cite{zheng2025large}, which is the first work to scale up continuous-time consistency models to large image and video diffusion models. Despite their significant empirical success and clean mathematical formulations, trajectory-based distillation methods still face open challenges, often exhibiting noticeable performance degradation when scaled up to large pretrained image synthesis or video generation models \cite{ wang2024phased}.
\subsection{Distribution-Matching Methods}
Distribution-matching distillation compresses a multi-step teacher into a few-step student by aligning the student's output distribution with the teacher's. DMD~\cite{yin2024one} minimizes a reverse KL between the two and achieves stable training and strong sample quality; DMD2~\cite{yin2024improved} augments it with an auxiliary GAN discriminator to mitigate the mode-seeking behavior of reverse KL, but inherits the well-known instability of adversarial training and only injects the real-data signal \emph{implicitly} through the discriminator. These issues are amplified for video diffusion, where the high spatiotemporal dimensionality makes mode collapse and over-saturation considerably more severe.
 
A parallel line of work generalizes the divergence itself. $f$-distill~\cite{xu2025one} casts DMD as a special case of integral $f$-divergence and reports gains from alternatives such as the Jeffreys divergence, while Uni-Instruct~\cite{wang2025uni} unifies Score Implicit Matching~\cite{luo2024one} and Score Identity Distillation~\cite{zhou2024score} under a single objective. Diversity-Preserving DMD~\cite{wu2026diversity} instead couples DMD with trajectory-based distillation to recover modal diversity, though scaling such hybrids to video remains open. Most recently, Transition Matching Distillation (TMD)~\cite{Nie2026TransitionMD} extends the paradigm temporally by matching conditional transition probabilities across sub-intervals of the denoising trajectory, enabling distillation of large-scale video models -- yet it still rests on the same reverse-KL backbone and inherits its diversity limitations.

\section{Background: Distribution Matching Distillation}
\label{sec:dmd_formulation}
We first review Distribution Matching Distillation (DMD), the reverse-KL objective it optimizes, and the mode-collapse issue that motivates our approach.

\paragraph{Distribution Matching Distillation (DMD)~\cite{yin2024one,yin2024improved}.}
To distill a pretrained multi-step teacher into a few-step student $G_\theta$, DMD matches the student's output distribution $p_{\mathrm{fake}}$ to the teacher's data distribution $p_{\mathrm{real}}$ via a KL objective. Since neither density is tractable, DMD reformulates the gradient in terms of \emph{score functions}, which can be approximated by diffusion denoisers. Concretely, for a noise-perturbed sample $\bm{x}_t = \alpha_t\, G_\theta(\bm{z}, c) + \sigma_t\, \bm{\epsilon}$ with $\bm{\epsilon} \sim \mathcal{N}(\bm{0}, \bm{I})$, the student gradient takes the form
\begin{equation} \label{eq:dmd_grad}
    \nabla_\theta \mathcal{L}_{\mathrm{DMD}}
    \;=\; \E_{t, \bm{z}, \bm{\epsilon}}\!\left[\, w(t)\,\big(\nabla_{\bm{x}} \log p_{\mathrm{fake}}(\bm{x}_t) - \nabla_{\bm{x}} \log p_{\mathrm{real}}(\bm{x}_t)\big)\, \nabla_\theta G_\theta(\bm{z}, c) \right],
\end{equation}
where $\nabla_{\bm{x}} \log p_{\mathrm{real}}$ is learned by the frozen teacher denoiser, $\nabla_{\bm{x}} \log p_{\mathrm{fake}}$ is learned by an auxiliary denoiser trained \emph{online} on student samples via denoising score matching, and $w(t)$ is a noise-level weighting. Training alternates between updating $\theta$ with Eq.~\eqref{eq:dmd_grad} and updating the auxiliary denoiser to track the moving $p_{\mathrm{fake}}$.

\paragraph{Reverse Kullback-Leibler (KL) Loss.}
DMD instantiates Eq.~\eqref{eq:dmd_grad} as the gradient of the \emph{reverse} KL,
\begin{equation} \label{eq:reverse_kl}
    D_{\mathrm{KL}}\!\big(p_{\mathrm{fake}} \,\|\, p_{\mathrm{real}}\big)
    \;=\; \E_{\substack{\bm{x} = G_\theta(\bm{z}, c) \\ \bm{z} \sim \mathcal{N}(\bm{0}, \bm{I})}}\!\left[\, \log \frac{p_{\mathrm{fake}}(\bm{x})}{p_{\mathrm{real}}(\bm{x})} \right],
\end{equation}
along the diffusion path whose differentiation yields
\begin{equation} \label{eq:reverse_kl_grad}
    \nabla_\theta D_{\mathrm{KL}}(p_{\mathrm{fake}} \,\|\, p_{\mathrm{real}})
    \;=\; \E_{\bm{z}}\!\left[\big(\nabla_{\bm{x}} \log p_{\mathrm{fake}}(\bm{x}) - \nabla_{\bm{x}} \log p_{\mathrm{real}}(\bm{x})\big)\, \nabla_\theta G_\theta(\bm{z}, c) \right],
    \quad \bm{x} = G_\theta(\bm{z}, c).
\end{equation}
This formulation gives strong few-step quality and fast convergence, but the reverse KL is well known to be \emph{mode-seeking}: the student collapses onto the highest-density modes of $p_{\mathrm{real}}$, producing low-diversity, often over-saturated samples~\cite{he2024training, lu2025adversarial}. The core reason for this mode-seeking behavior is that reverse KL divergence is evaluated as an expectation over the student's generated distribution. Consequently, if the student drops a mode, the loss function incurs no penalty in that region, yielding zero gradient signal to recover the missing mode. The core reason for mode-seeking is that reverse KL samples from the data generated by the student, and when there is a missing mode in the student, the KL divergence can not pull it back.

\paragraph{Regularization with Real Data}
To overcome the stated mode collapse, over-saturated problem introduced by reverse KL loss, DMD2~\cite{yin2024improved} partially compensates with an auxiliary GAN \cite{goodfellow2014generative} discriminator, but adversarial training is unstable and only \emph{implicitly} injects the real-data signal through the discriminator rather than through the distribution-matching gradient itself.

\section{Data-Forcing Distillation}
\label{sec:method}
Our core intuition is to incorporate real data directly into the reverse KL gradient through a differentiable regularizer, where the student's distribution can be explicitly pulled toward the diverse, high-quality real data distribution rather than supervising itself in the teacher.  In this section, we first formally formulate our proposed data-forcing distillation, then give a practical implementation. 

\subsection{The Data-Forcing Distillation (DFD) Framework}
Our DFD injects a real-data regularization term into the original DMD gradient.
\begin{equation}\label{eq:dfd_grad_regular}
\begin{aligned}
g_{\mathrm{DFD}}(\theta) \;=\;\;
& \underbrace{\E_{\bm{z} \sim \mathcal{N}(\bm{0}, \bm{I})}\!\Bigl[\bigl(\nabla_{\bm{x}} \log p_{\mathrm{fake}}(G_\theta(\bm{z},c)) - \nabla_{\bm{x}} \log p_{\mathrm{real}}(G_\theta(\bm{z},c))\bigr)\, \nabla_\theta G_\theta(\bm{z}, c)\Bigr]}_{\text{native DMD gradient}\;g_{\mathrm{DMD}}(\theta)} \\
- \;\;
& \underbrace{\E_{\substack{\bm{x} \sim p_{\mathrm{real}}(\cdot \mid c), \, \bm{z} \sim \mathcal{N}(\bm{0}, \bm{I})}}\!\Bigl[\Delta_{p_{\mathrm{real}},\, \bm{x},\, G_\theta(\bm{z})}\, \nabla_\theta G_\theta(\bm{z}, c)\Bigr]}_{\text{real-data regularizer}}.
\end{aligned}
\end{equation}
\begin{equation}\label{eq: regularizes term}
\Delta_{p_{\mathrm{real}},\bm{x},G_\theta(\bm{z})} \;=\; \nabla_{\bm{x}} \log p_{\mathrm{real}}(\bm{x})-\nabla_{\bm{x}} \log p_{\mathrm{real}}(G_\theta(\bm{z},c)), \quad  \bm{x} \sim p_{\mathrm{real}}(\cdot \mid c).
\end{equation}
We refer to $\Delta_{p_{\mathrm{real}},\bm{x},G_\theta(\bm{z})}$ as the \textbf{teacher score discrepancy}: it measures the gap, in the teacher's score field, between a real sample and the student's generation. When the student matches the real distribution ($p_{\mathrm{fake}}=p_{\mathrm{real}}$), this discrepancy is minimized to zero in expectation, $\E_{\bm{x}, \bm{z}}[ \Delta_{p_{\mathrm{real}},\bm{x},G_\theta(\bm{z})} ] = 0$ (proof in the Appendix~\ref{app: additional theory}). Empirically, drawing $\bm{x}$ from diverse, high-quality real-world data and minimizing $ \Delta_{p_{\mathrm{real}},\bm{x},G_\theta(\bm{z})}$ pulls the student toward modes it has missed in the real data (mitigating mode collapse) and away from problematic modes such as over-saturated outputs that are not in the real data. More importantly, properties of real data, such as temporal coherence, physical plausibility, photorealism, can be explicitly distilled into the student through this term.


\paragraph{Discussion: No Need for a GAN.}
DMD2~\cite{yin2024improved} provides \emph{implicit} real-data supervision through an auxiliary GAN loss; in contrast, DFD provides \emph{explicit} real-data supervision through the score-discrepancy term (Eq.~\ref{eq: regularizes term}). As a result, we drop the GAN loss completely from our distillation pipeline and it will not affect our performance. We validate this in the ablation study in \cref{sub: ablate study}.

\subsection{The Underlying Assumption of DFD}
\label{subsub: Validity Condition}
The teacher score discrepancy $\Delta_{p_{\mathrm{real}},\bm{x},G_\theta(\bm{z})}$ is well-behaved \emph{in expectation}. However, what governs whether this regularizer helps or hurts training is not its expectation but its \emph{variance}---a zero-mean signal can still produce a high-noisy gradient that destabilizes optimization. We therefore need  $\E_{\bm{x}, \bm{z}}\left[||\Delta_{p_{\mathrm{real}},\bm{x},G_\theta(\bm{z})}||^2 \right]$ to stay small. To see what controls this variance, a first-order expansion of the teacher's score around $G_\theta(\bm{z},c)$ shows that, to leading order, the per-sample discrepancy is approximately proportional to the gap between paired real and generated samples,
\begin{equation} \label{eq:term_expansion}
\bigl\|\Delta_{p_{\mathrm{real}},\bm{x},G_\theta(\bm{z})}\bigr\|
    = \bigl\|\nabla_{\bm{x}} \log p_{\mathrm{real}}(\bm{x})-\nabla_{\bm{x}} \log p_{\mathrm{real}}(G_\theta(\bm{z},c))\bigr\|
    \;\approx\; \alpha_t\,L(c,t)\,\|\bm{x} - G_\theta(\bm{z},c)\|
\end{equation}
where $\bm{x} \sim p_{\mathrm{real}}(\cdot \mid c) $, $\alpha_t$ is the diffusion schedule coefficient, and $L(c, t)$ is the Lipschitz constant of the teacher's score network in practice. Consequently the variance of the regularizer is approximately proportional to the squared gap $\E_{\bm{x}, \bm{z}}[\|\bm{x} - G_\theta(\bm{z},c)\|^{2}]$, and controlling $\E_{\bm{x}, \bm{z}}\left[|| \Delta_{p_{real},\bm{x},G_{\theta}(\bm{z})}||^2 \right]$ reduces to controlling this gap. We formalize this requirement as the \emph{validity condition}:
\begin{equation}
    \E_{\bm{x}, \bm{z}}\!\left[\norm{\bm{x} - G_\theta(\bm{z},c)}^{2} \;\big|\; c\right] \;\leq\; \delta(c)^{2},
\label{eq:validity condition}
\end{equation}
where $\delta(c)$ is a conditioning-dependent bound. Combining the approximation above with Eq.~\ref{eq:term_expansion} yields the corresponding (approximate) control on the regularizer's variance:
$\E_{\bm{x}, \bm{z}}\!\bigl[\bigl\|\Delta_{p_{\mathrm{real}},\bm{x},G_\theta(\bm{z})}\bigr\|_{2}^{2} \,\big|\, c\bigr]
    \;\lesssim\; \alpha_t^{2}\,L(c,t)^{2}\,\delta(c)^{2}$.
The complete derivation is in Appendix.~\ref{app: additional theory}. Since we evaluate the score along the full diffusion path rather than only at its endpoint with pure noise, neither $\alpha_t$ nor $L(c,t)$ will explode.

\subsection{Practical Implementation}

Eq.~\eqref{eq:dfd_grad_regular} admits a clean simplification. Because the regularizer $\Delta_{p_{\mathrm{real}},\, \bm{x},\, G_\theta(\bm{z})}$ is evaluated at the same noise level and condition $c$ as the DMD gradient, it exactly cancels the score $\nabla_{\bm{x}} \log p_{\mathrm{real}}(G_\theta(\bm{z}, c))$ . The DFD gradient therefore reduces to
\begin{equation}\label{eq:dfd_grad_simple}
    g_{\mathrm{DFD}}(\theta)
    \;=\;
    \E_{\substack{\bm{x} \sim p_{\mathrm{real}}(\cdot \mid c) \\  \bm{z} \sim \mathcal{N}(\bm{0}, \bm{I})}}\!
    \left[\bigl(\nabla_{\bm{x}} \log p_{\mathrm{fake}}(G_\theta(\bm{z}, c)) - \nabla_{\bm{x}} \log p_{\mathrm{real}}(\bm{x})\bigr)\, \nabla_\theta G_\theta(\bm{z}, c)\right].
\end{equation}


\begin{wrapfigure}{r}{0.6\columnwidth}
    \centering
    \includegraphics[width=0.58\columnwidth]{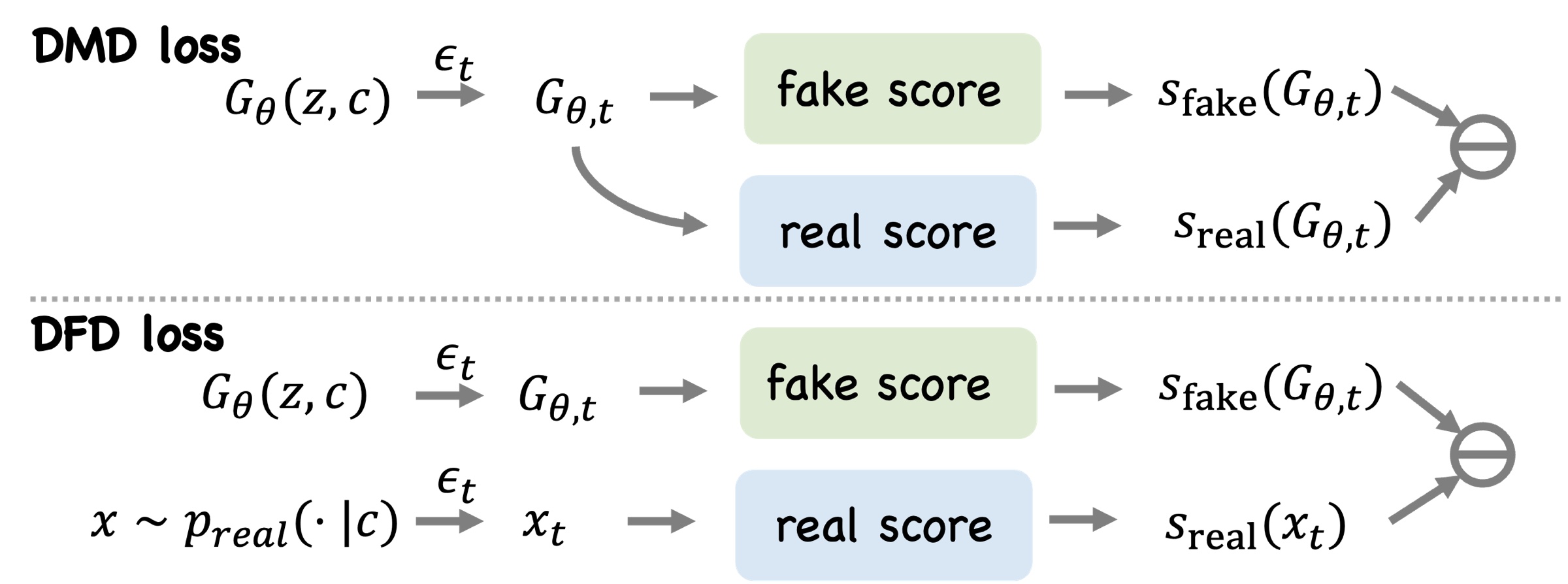}
    \caption{\textbf{The comparison between our DFD and the original DMD}. Our DFD computes the real score directly using the videos sampled from the real data distribution, while the original DMD computes the real score using the generated videos from the student.}
    \label{fig: diff between dmd/dfd}
\end{wrapfigure}


The only difference compared to the DMD gradient (Eq.~\ref{eq:reverse_kl_grad}) is the second score: instead of evaluating $\nabla_{\bm{x}} \log p_{\mathrm{real}}$ at the student's own output $G_\theta(\bm{z},c)$, we evaluate it at a real data sample $\bm{x}$ drawn under the same condition. Fig.~\ref{fig: diff between dmd/dfd} illustrates this difference.

In practice, we do not enforce Eq.~\eqref{eq:validity condition} as a hard constraint. Instead, we satisfy it implicitly by applying DFD as a post-training stage on top of a DMD2-pretrained model, whose generations are already close to the real video and naturally keep $\E_{\bm{x}, \bm{z}}[\|\bm{x} - G_\theta(\bm{z},c)\|^{2}]$ small. The ablation in \cref{sub: ablate study} confirms the importance of this regime: violating the validity condition---e.g., applying DFD from scratch when the student still produces noisy videos---prevents the model from converging.
Concretely, we form the practical update as a combination of the DMD gradient and the DFD gradient,
\begin{equation}\label{eq: practical update gradient}
    g(\theta) \;=\; (1-w)\, g_{\mathrm{DMD}}(\theta) \;+\; w\, g_{\mathrm{DFD}}(\theta),
\end{equation}
where $w \in [0,1]$ controls how much of the DMD signal is retained. We use $w = \tfrac{1}{2}$ as our default, which preserves the fast convergence of the DMD while leveraging real-data guidance.
In practice, we implement Eq.~\ref{eq: practical update gradient} via per-step stochastic sampling, which matches in expectation when $p = w$:
\begin{equation}\label{eq:hybrid}
    \nabla_\theta \mathcal{L} =
    \begin{cases}
        g_{\mathrm{DFD}}(\theta) & \text{with probability } p, \\[4pt]
        g_{\mathrm{DMD}}(\theta) & \text{with probability } 1-p.
    \end{cases}
\end{equation}
Additionally, following standard practice in the DMD line of work, we replace the exact score in Eq.~\ref{eq: practical update gradient} with the scores estimated by the diffusion models on perturbed samples, and take the expectation over the diffusion timesteps.
\paragraph{Pseudo-Code}
We provide the pseudo-code of our DFD below, and highlight the simplicity of our method, which only adds one line of code change compared with the original DMD2 method:
\begin{figure}[ht]
\centering
\algrule[1pt]
\textbf{Algorithm 1} Pseudocode of Student Update Step in a PyTorch-like Style.
\algrule[.4pt]
\begin{lstlisting}[basicstyle=\footnotesize\ttfamily]
# student_network / teacher_network / fake_score_network: networks
# input_student, t_student: noisy input and timestep fed to student
# t, eps: diffusion timestep and noise for forward diffusion
# data: real data batch; condition: video gen conditioning
# p: probability of using DFD vs DMD 
def student_update_step(input_student, t_student, t, eps, data, condition=None):
    # Generate samples using student network.
    gen_data = student_network(input_student, t_student, condition=condition)
    
    # teacher_data = gen_data.detach() # Original DMD update
    (@\color{brightviolet}\bfseries teacher\_data = data.detach() if (torch.rand() < p) else gen\_data.detach()  @)
    
    # Inject noise into the data via forward diffusion
    perturbed_data         = forward_diffusion(gen_data, eps, t)
    perturbed_teacher_data = forward_diffusion(teacher_data, eps, t)
    # Estimate scores
    fake_score    = fake_score_network(perturbed_data, t, condition=condition)
    teacher_score = teacher_network(perturbed_teacher_data, t, condition=condition)
    # Compute gradient and student loss
    vsd_grad      = fake_score - teacher_score
    pseudo_target = gen_data - vsd_grad
    gen_loss      = 0.5 * F.mse_loss(gen_data, pseudo_target.detach())
    return gen_loss
\end{lstlisting}
\algrule[.4pt]
{The \textcolor{brightviolet}{\textbf{violet}} block is the \emph{only} difference between the DFD and DMD frameworks.}
\end{figure}
\subsection{Data Selection for DFD}
Once real data enters the distillation gradient, its quality and diversity become the new bottleneck on what the student can learn. We therefore curate the training videos with high quality and diversity. Specifically, we start from the public ViPE-Wild-1M dataset~\cite{huang2025vipe}, generated by the Wan 2.1~\cite{wan2025wan} 14B model, and cluster its 1M videos into 1{,}000 clusters. We then manually retain the clusters whose videos exhibit clear semantics, temporal coherence, varied styles, and diverse dynamics. From the retained pool, we construct two training sets: a 20{,}000-video animation and cartoon set and a complete 30{,}000-video mixed-style set. We detail the exact selection process in Appendix \ref{app: experiment detail}
\section{Experiments and Results}
\label{sec:experiments}
\begin{figure*}[t]
\begin{center}
    \includegraphics[width=1\textwidth]{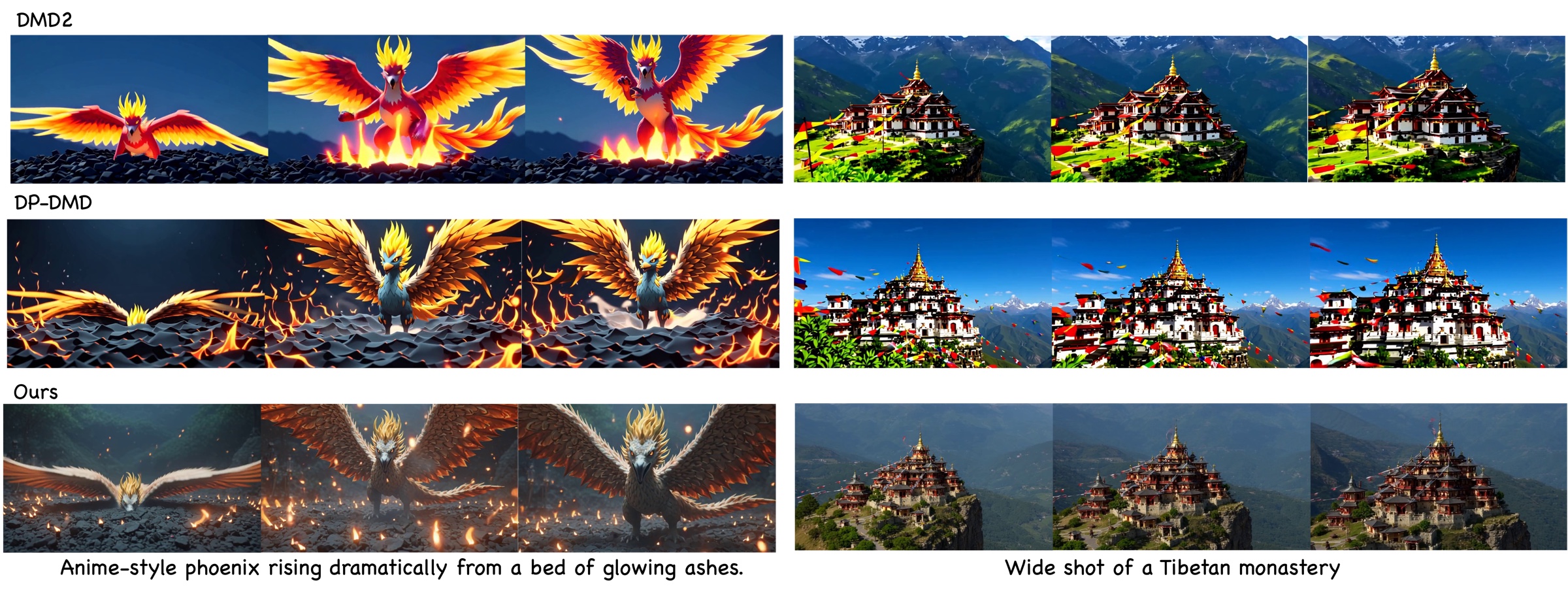}
\end{center}
\caption{ \textbf{Qualitative results on text-to-video generation.} The left columns show models distilled on animation set, and the right column on the mix-style set. Our method produces videos that are not over-saturated, and recovers finer details such as the wing texture of the phoenix.}
\label{fig:main results t2v 1}

\end{figure*}
We evaluate DFD on three representative large-scale video-generation settings: Wan2.1-1.3B for text-to-video generation and Cosmos-Predict2.5-2B for image-to-video generation,
We further conduct a comprehensive ablation study to demonstrate the effectiveness of our design choices.

\subsection{Main Experiments}

\subsubsection{Text-to-Video Generation}
\paragraph{Experimental Settings.}
For the text-to-video generation, we distill the Wan2.1-1.3B model on both the animation set and the mix-style set.
We compare against two baselines: DMD2 \cite{yin2024improved} and DP-DMD \cite{wu2026diversity}. For DMD2, we use the implementation and default configurations from the FastGen codebase. For DP-DMD, we follow the original paper and use a diversity anchor step of $K=5$ with weight $0.05$ \cite{wu2026diversity}. For all the methods, we distill the pretrained Wan model into a four-step student model. 
We evaluate results along three aspects. \textbf{Video quality}: we adopt the metrics from VBench \cite{huang2024vbench}. \textbf{Video diversity}: we adopt the diversity metric from DP-DMD \cite{wu2026diversity}. \textbf{Camera-pose diversity}: we use the ViPE \cite{huang2025vipe} to estimate the camera poses of each generated video, and compute diversity statistics from the estimated cameras. 
The evaluation set comprises 70 animation prompts and 70 mix-style prompts, respectively. We generate 8 videos for each prompt with random seeds $1$--$8$, yielding $1120$ videos in total for evaluation.
Further details are provided in the Appendix~\ref{app: experiment detail}.

\begin{table*}[h]
  \centering
  \caption{\footnotesize Quantitative results on the text-to-video experiments using Wan2.1-1.3B. For all metrics, higher is better. The best result among the distilled methods in each column is \textbf{bolded}.}
  \label{tab:results_v3}
  \setlength{\tabcolsep}{4pt}
  \renewcommand{\arraystretch}{1.2}
  \resizebox{\textwidth}{!}{%
    \begin{tabular}{l cccccc cccc cc}
      \toprule
      & \multicolumn{6}{c}{\textbf{Video Quality}}
      & \multicolumn{4}{c}{\textbf{Video Diversity}}
      & \multicolumn{2}{c}{\textbf{Camera Pose Diversity}} \\
      \cmidrule(lr){2-7} \cmidrule(lr){8-11} \cmidrule(lr){12-13}
      \textbf{Method} &
      \shortstack{\textbf{Subject}\\\textbf{Consistency}} &
      \shortstack{\textbf{Background}\\\textbf{Consistency}} &
      \shortstack{\textbf{Temporal}\\\textbf{Flickering}} &
      \shortstack{\textbf{Motion}\\\textbf{Smoothness}} &
      \shortstack{\textbf{Aesthetic}\\\textbf{Quality}} &
      \shortstack{\textbf{Average}\\\textbf{VBench}} &
      \shortstack{\textbf{CLIP}\\\textbf{(Mean)}} &
      \shortstack{\textbf{CLIP}\\\textbf{(Per-frame)}} &
      \shortstack{\textbf{DINO}\\\textbf{(Mean)}} &
      \shortstack{\textbf{DINO}\\\textbf{(Per-frame)}} &
      \shortstack{\textbf{Endpoint}\\\textbf{Distance}} &
      \shortstack{\textbf{Trajectory}\\\textbf{Distance}}
      \\
      \midrule
      Teacher & 0.956 & 0.959 & 0.976 & 0.985 & 0.622 & 0.899 & 0.178 & 0.222 & 0.301 & 0.350 & 30.571 & 26.651 \\
      \midrule
      DMD2  & 0.956 & \textbf{0.957} & 0.973 & 0.985 & 0.634 & 0.901 & 0.120 & 0.165 & 0.190 & 0.239 & 9.148 & 4.284\\
      DP-DMD & \textbf{0.960} & \textbf{0.957} & \textbf{0.977} &0.987 & 0.633 & 0.903 & 0.126 & 0.165 & 0.197 & 0.239 & 7.208 & 3.466 \\
      Ours  & 0.956 & 0.955 & 0.976 & \textbf{0.988} & \textbf{0.655} & \textbf{0.906} & \textbf{0.128} & \textbf{0.170} & \textbf{0.205} & \textbf{0.252} & \textbf{18.513} & \textbf{19.256} \\
    
      \bottomrule
    \end{tabular}%
  }
\end{table*}


\paragraph{Experimental Results}
We provide qualitative results in Fig.~\ref{fig:main results t2v 1} with quantitative results averaged on the models distilled on the animation test set and the mix-style testset in Table~\ref{tab:results_v3}. 
Qualitatively, our method produces higher-quality videos: it mitigates the over-saturation artifacts with significantly better appearance and dynamics, and recovers fine-grained details.
Quantitatively, DFD obtains the best overall VBench score among all the distilled models, mainly driven by improved aesthetic quality and motion smoothness. It also improves all four visual-diversity metrics and substantially increases camera-pose diversity compared with DMD2 and DP-DMD.
Additional qualitative results are provided in Appendix \ref{app sub: additional results for i2v}.

%


\subsubsection{Image-to-Video Generation}
\paragraph{Experimental Settings.}
For image-to-video generation, we distill the Cosmos-Predict2.5-2B model on our curated mix-style dataset.
Same as text-to-video generation, all the methods are implemented in the FastGen codebase.
For evaluation, we use two image-to-video test sets: one from VBench, containing 348 images, and one curated from the ViPE-Wild-1M dataset, containing 78 images that are held out from training. We
adopt the image-to-video metrics from VBench \cite{huang2024vbench} for evaluation. Further details are provided in the Appendix~\ref{app: experiment detail}.

\paragraph{Experimental Results.}
We provide quantitative results in Table~\ref{tab:vbench-comparison-vbench} and Table~\ref{tab:vbench-comparison-vipe} (in the appendix), with qualitative results in Fig~\ref{fig:main results i2v vipe} and Fig~\ref{fig:main results i2v vbench}. Quantitatively, our method outperforms the baselines on the majority of metrics, and is particularly strong at preserving first-frame conditioning and maintaining temporal coherence. Qualitatively, DP-DMD frequently violates the first-frame constraint, occasionally producing abrupt artifacts that disrupt the whole frame
DMD2 yields visible quality improvements over DP-DMD but still struggles to maintain temporal coherence in later frames and often hallucinates structural anomalies such as distorted human figures appearing outside vehicles. 
In contrast, our model adheres closely to the input frame and remains stable and consistent across all generated frames.
Our model also achieves better physical plausibility.
In the traffic scene example shown in Fig.~\ref{fig:main results i2v vbench}, DMD2 produces visibly implausible interactions, such as vehicles intersecting. DFD reduces such artifacts in this example and yields more coherent motion.
These results support the core intuition of our approach: by incorporating real data into the distillation process, the generator can not only mitigate mode-seeking behavior and over-saturation artifacts, but also capture key properties of real videos, such as first-frame coherence, temporal consistency, and physical realism.
\begin{figure*}[t]  
\begin{center}
    \includegraphics[width=1\textwidth]{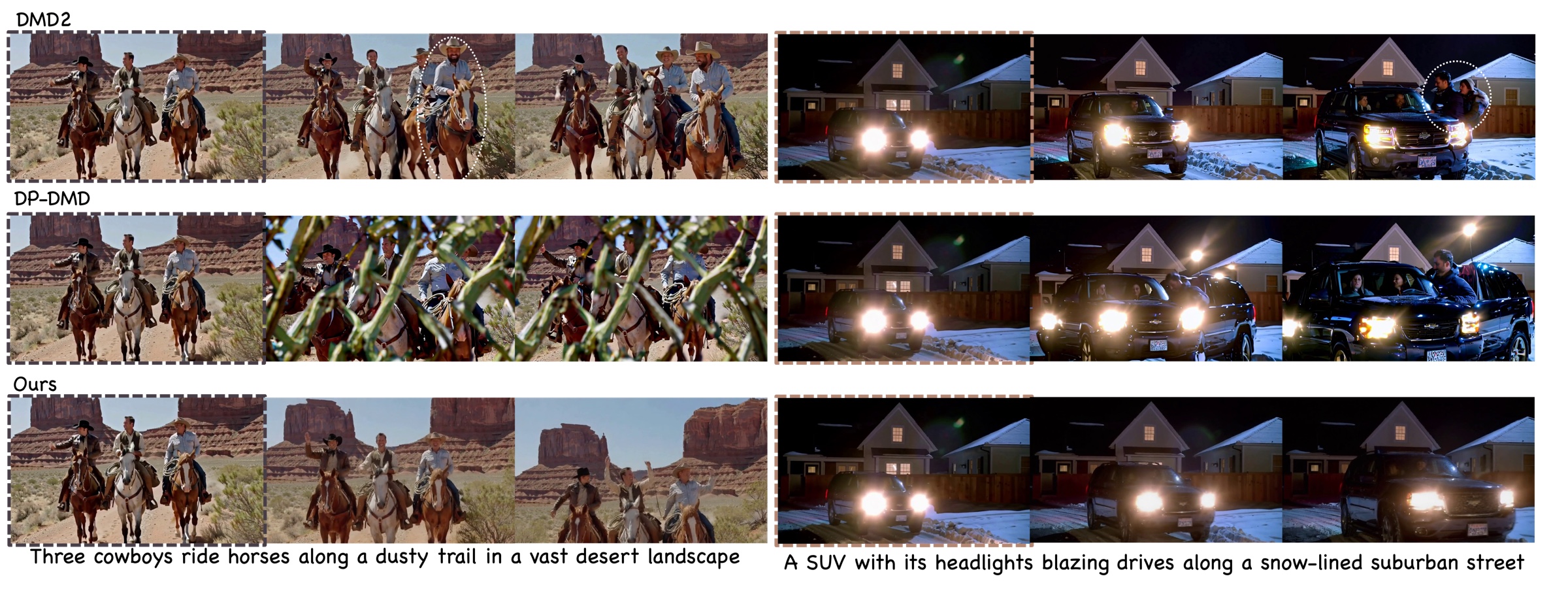}
\end{center}
\caption{ \textbf{I2V results on the ViPE test set.} The colored outline indicates the input image. Our method produces videos that closely follow the first frame and remain coherent across the full sequence, whereas DMD2 and DP-DMD introduce structural anomalies such as an extra cowboy appearing from nowhere (highlighted by white circles).}
\label{fig:main results i2v vipe}
\end{figure*}
\begin{table*}[h]
\centering
\caption{ Quantitative comparison on image-to-video generation, we evaluate on the VBench test image suite using the metrics from VBench. For all metrics, higher is better. The best result in each column is \textbf{bolded}. Our method consistently outperforms the baselines on all the metrics, and is even better than the teacher model.}
\setlength{\tabcolsep}{4pt}
\renewcommand{\arraystretch}{1.2}
\resizebox{\textwidth}{!}{%
\begin{tabular}{l|ccccccc|c}
\toprule
\textbf{Method} &
\shortstack{\textbf{Subject}\\\textbf{Consistency}} &
\shortstack{\textbf{Background}\\\textbf{Consistency}} &
\shortstack{\textbf{Aesthetic}\\\textbf{Quality}} &
\shortstack{\textbf{Temporal}\\\textbf{Flickering}} &
\shortstack{\textbf{Motion}\\\textbf{Smoothness}} &
\shortstack{\textbf{Image-to-Video}\\\textbf{Subject}} &
\shortstack{\textbf{Image-to-Video}\\\textbf{Background}} &
\textbf{Average} \\
\midrule
Teacher       & 0.9616 & 0.9648 & 0.6320 & 0.9717 & 0.9905 & 0.9867 & 0.9925 & 0.9285 \\
\midrule
DMD2          & 0.9421 & 0.9621 & 0.6301 & 0.9734 & 0.9886 & 0.9850 & 0.9900 & 0.9245 \\
DP-DMD        & 0.8457 & 0.9068 & 0.5850 & 0.9603 & 0.9832 & 0.9570 & 0.9520 & 0.8843 \\
Ours & \textbf{0.9613} & \textbf{0.9692} & \textbf{0.6371} & \textbf{0.9759} & \textbf{0.9900} & \textbf{0.9859} & \textbf{0.9930} & \textbf{0.9303} \\
\bottomrule
\end{tabular}%
}
\label{tab:vbench-comparison-vbench}
\end{table*}

\begin{figure*}[htbp]
\begin{center}
    \includegraphics[width=1\textwidth]{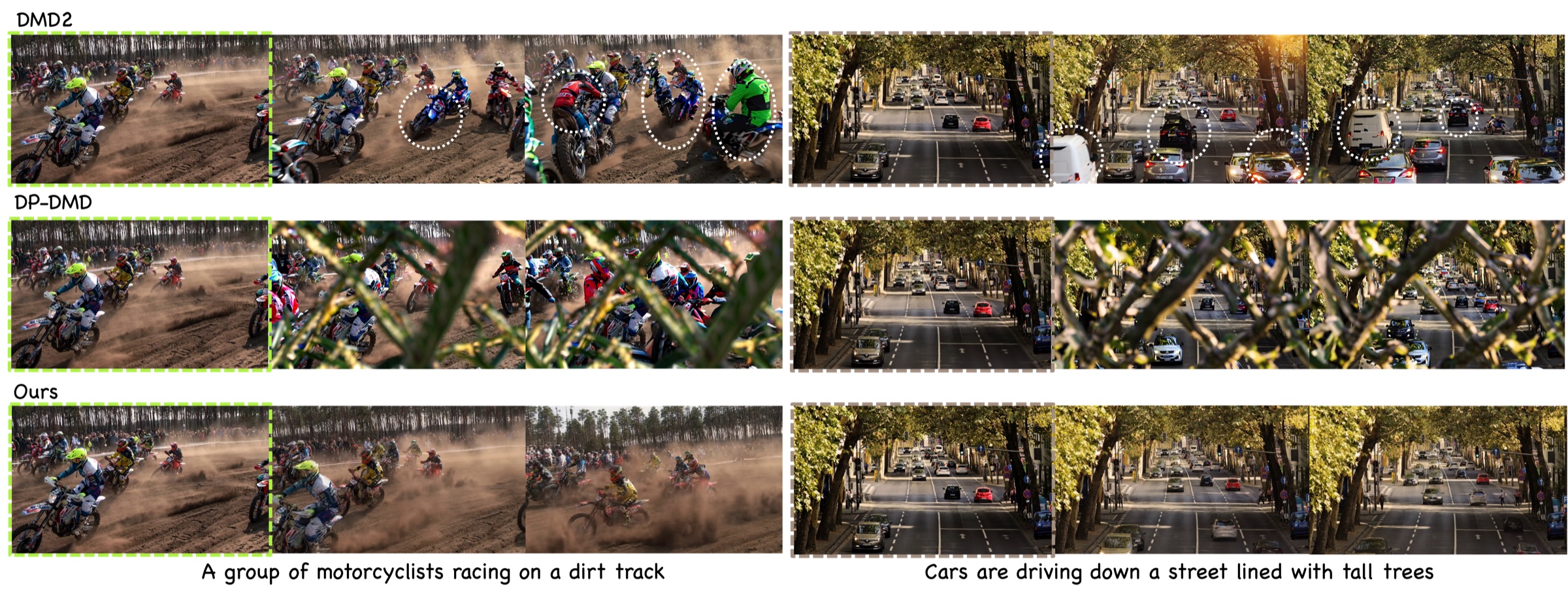}
\end{center}
\caption{\textbf{I2V results on the VBench test set.} The colored outline indicates the input image. Our method shows much better visual quality with smooth and physically plausible dynamics under complex scenarios, whereas DP-DMD and DMD2 fail to produce valid videos.}
\label{fig:main results i2v vbench}
\end{figure*}
\subsubsection{Autoregressive Video Generation}
\paragraph{Experimental Settings.}
For autoregressive video generation, we distill the Wan2.1-1.3B model on our curated mixed-style dataset following the \textit{Self Forcing} framework~\cite{huang2025selfforcing}. All methods are implemented within the Self Forcing codebase.
Additional implementation details are provided in Appendix~\ref{app: experiment detail}.

\paragraph{Experimental Results.}
Qualitative results are presented in Fig.~\ref{fig:self_forcing}. 
Similar to our image-to-video experiments, Self Forcing with DMD2 struggles to preserve temporal coherence over long horizons and frequently introduces structural artifacts and hallucinations. For example, crab-like figures gradually emerge from the stone scene.
In contrast, our method maintains stable content and coherent object structures throughout the generated sequence.
Our method also produces videos with stronger physical plausibility. 
In the train example shown in Fig.~\ref{fig:self_forcing}, DMD2 generates physically inconsistent interactions including disconnected train carriages.
DFD substantially reduces such artifacts and yields a more coherent and realistic scene. 
These observations echo our I2V findings and further support the core intuition behind our approach: 
by incorporating real data into the distillation process, 
the generator not only alleviates mode-seeking behavior and over-saturation artifacts, but also encourages the model to capture key characteristics of real videos,
such as inter-frame coherence, temporal consistency, and physical realism.
\begin{figure*}[htbp]
\begin{center}
    \includegraphics[width=1\textwidth]{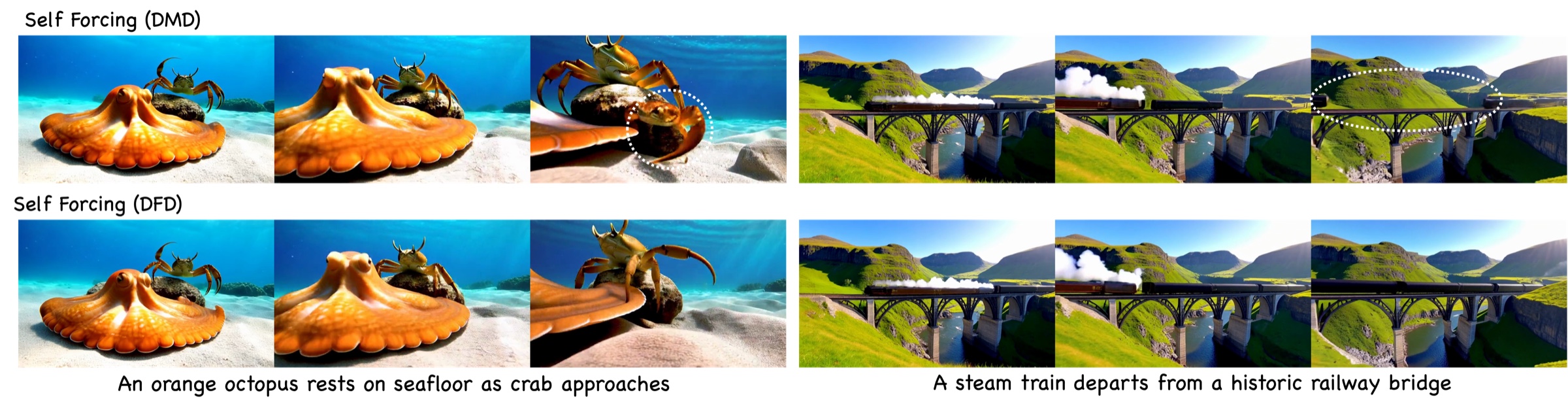}
\end{center}
\caption{\textbf{Autoregressive video generation results.} Results from the generator trained with the Self Forcing pipeline using DFD loss outperform those trained with DMD loss, especially in frame consistency and physical plausibility.}
\label{fig:self_forcing}
\end{figure*}

\begin{figure*}[h]
\begin{center}
    \includegraphics[width=1\textwidth]{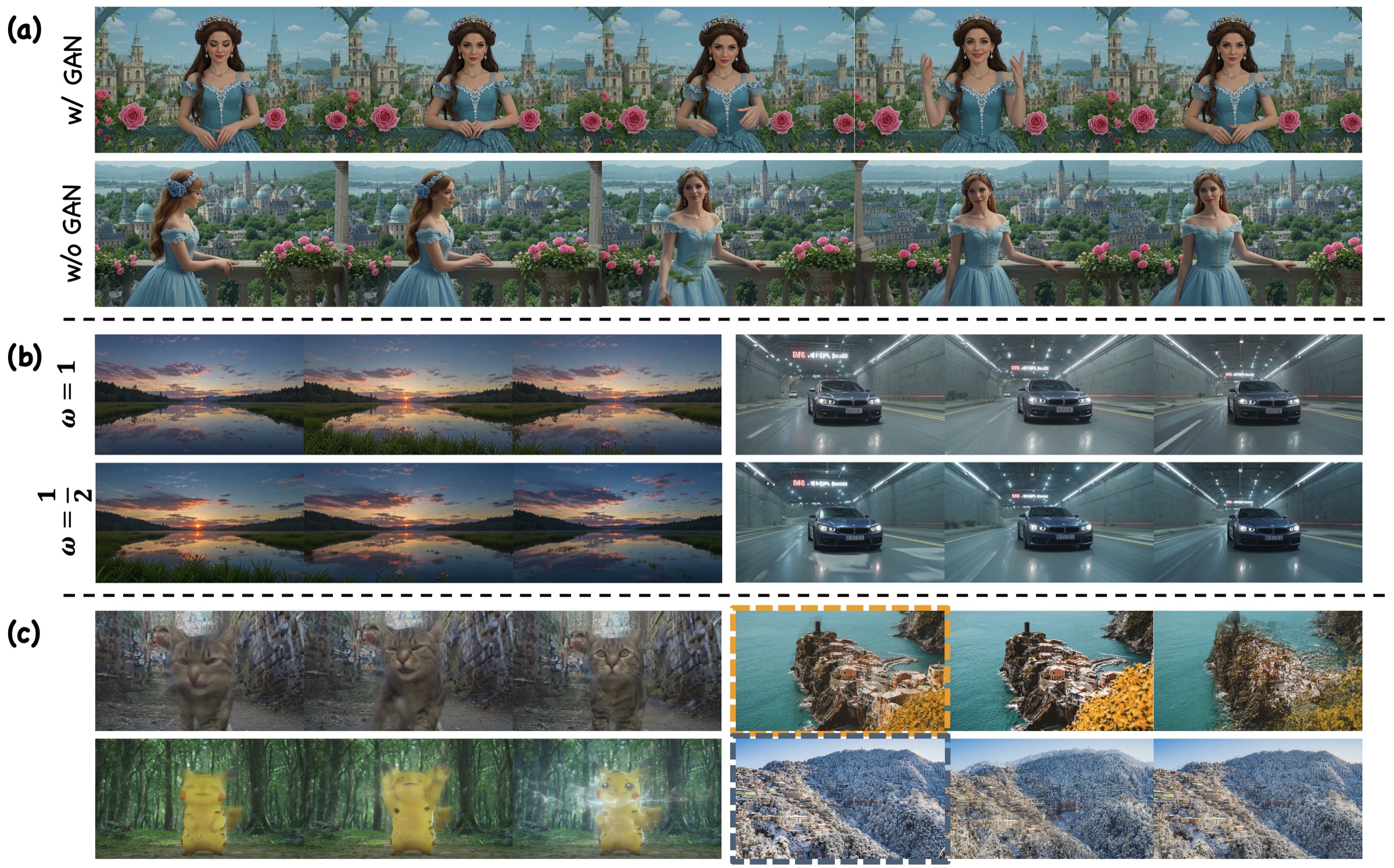}
\end{center}
\caption{\textbf{(a): Ablation on the effect of the GAN loss.} Adding the GAN loss yields no clear quality improvement, and video dynamics even degrade, which is consistent with our quantitative results. \textbf{(b): Ablation on the weight in Eq.~\ref{eq: practical update gradient}}: $w=1$ (upper) vs.\ $w=\frac{1}{2}$ (lower). Each row shows three evenly sampled frames from one generated video. There is no clear visual difference between the two settings. \textbf{(c): Qualitative results without DMD2 pretraining.} Initializing purely from the teacher model without DMD2 pretraining fails to produce reasonable results, even after a sufficient number of training steps for both the text-to-video and image-to-video tasks.}
\label{fig:train_from_scratch}
\end{figure*}

\begin{figure*}[h]  
\begin{center}
    \includegraphics[width=1\textwidth]{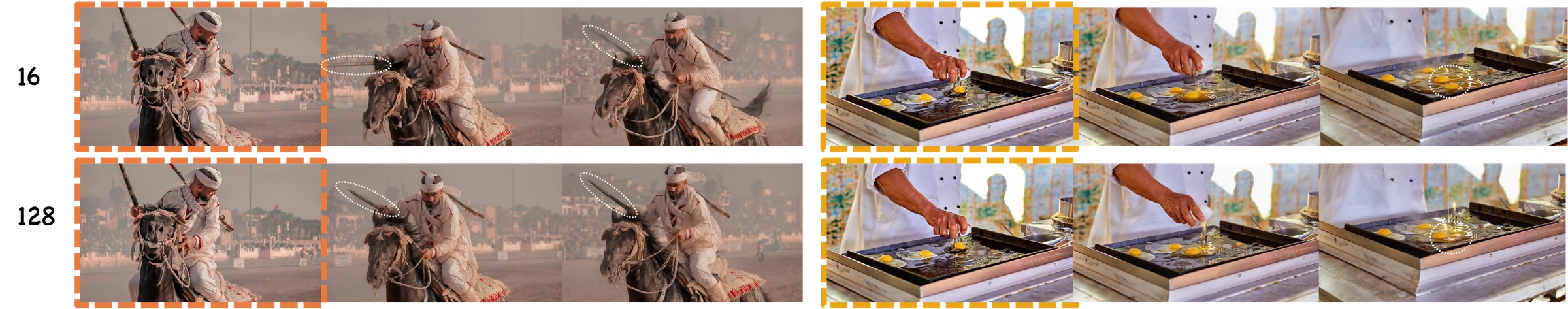}
\end{center}
\caption{ \textbf{Effect of scaling up the distillation batch size.} The colored outline indicates the input image. Increasing the batch size from 16 to 128 yields videos with superior visual quality and physical consistency. For instance, the spear in the rider's hand exhibits much greater temporal clarity. Additionally, the scaled model preserves exact object counts across frames (the eggs), whereas the smaller batch size struggles with physical plausibility by generating phantom objects out of nowhere (highlighted by the white circle).}
\label{fig:scale_up}

\end{figure*}

\subsection{Ablation Study}
\label{sub: ablate study}



\begin{table*}[h]
\centering
\caption{Comparison between models distilled with batch size $16$ and $128$, evaluated on the VBench test set.}
\setlength{\tabcolsep}{4pt}
\renewcommand{\arraystretch}{1.2}
\resizebox{\textwidth}{!}{%
\begin{tabular}{l|ccccccc|c}
\toprule
\textbf{Batch size} &
\shortstack{\textbf{Subject}\\\textbf{Consistency}} &
\shortstack{\textbf{Background}\\\textbf{Consistency}} &
\shortstack{\textbf{Aesthetic}\\\textbf{Quality}} &
\shortstack{\textbf{Temporal}\\\textbf{Flickering}} &
\shortstack{\textbf{Motion}\\\textbf{Smoothness}} &
\shortstack{\textbf{Image-to-Video}\\\textbf{Subject}} &
\shortstack{\textbf{Image-to-Video}\\\textbf{Background}} &
\textbf{Average} \\

\midrule
$16$          & 0.9613 & \textbf{0.9692} & 0.6371 & \textbf{0.9759} & 0.9900 & 0.9859 & \textbf{0.9930} & 0.9303 \\
$128$        & \textbf{0.9638} & 0.9685 & \textbf{0.6383} & 0.9783 & \textbf{0.9914} & \textbf{0.9878} & 0.9929 & \textbf{0.9316} \\

\bottomrule
\end{tabular}%
}
\label{tab:scale up}
\end{table*}

\paragraph{Effects of the GAN Loss.}
We ablate the requirement of GAN loss, and compare our DFD pipeline with and without an additional GAN loss. We experiment on the text-to-video generation on animation set. The VBench quality scores are reported in Table~\ref{tab: ablate gan} (in the appendix), and qualitative results in Fig~\ref{fig:train_from_scratch} (a). 
This ablation suggests that, in our post-training setting, adding the GAN loss does not provide a consistent benefit. Removing it simplifies the pipeline and improves dynamic degree and imaging quality, while maintaining comparable scores on the remaining metrics.
\paragraph{Effects on Weights in Eq.~\ref{eq: practical update gradient}.}
We ablate the weights in the Eq.~\ref{eq: practical update gradient}. We experiment on the text-to-video generation on animation set. We compare two weighting schemes, $w=\frac{1}{2}$ (our default setting) and $w=1$ (pure DFD without DMD), on the text-to-video generation. Quantitative results are reported in Table~\ref{tab:hybrid_comparison} (in the appendix), and qualitative results in Fig~\ref{fig:train_from_scratch} (b). The differences between the two settings are small, indicating that our method is stable and insensitive to the choice of $w$. We adopt $w=\frac{1}{2}$ for theoretical reasons as it better satisfies the condition in Eq.~\ref{eq:validity condition}.
\paragraph{Effects of DMD2 Pretraining.}
Our DFD needs to satisfy the \emph{validity condition} in Eq.~\ref{eq:validity condition} for stabilized training. To verify this, we ablate by initializing the student with the teacher model (without DMD2 distillation) and comparing the DMD2 initialization. We experiment on both the text-to-video with animation set and image-to-video tasks with mix-style set.
Qualitative results are provided in Fig~\ref{fig:train_from_scratch} (c). Even after a long training run (e.g., 1400 iterations), the model fails to converge, which underscores the necessity of the validity condition.

\subsection{Scaling Up with Large Batch Size}
We further scale up the image-to-video generation on the Cosmos-Predict2.5-2B model by increasing the batch size from $16$ to $128$. As shown quantitatively in Table~\ref{tab:scale up} and visually in Fig~\ref{fig:scale_up}, the model distilled with a large batch size consistently outperforms the model with a small batch size. The larger batch size yields remarkably better temporal stability during large motions (e.g., the rider's spear) and improved physical plausibility, such as preserving object permanence (e.g., consistent egg counts).

\section{Conclusion}
\label{sec:conclusion}
We introduce a data forcing distillation framework that advances the state-of-the-art for few-step video diffusion models, and restores diversity and fidelity. Our DFD requires only a single line of code change to the DMD2 baseline. At its core is the explicit integration of real data into the distribution matching objective, which successfully overcomes the diversity degradation and over-saturation typically induced by reverse-KL formulations. We provide a theoretical analysis to understand our DFD. Through comprehensive, large-scale evaluations on text-to-video and image-to-video generation benchmarks, we demonstrate that our approach successfully resolves persistent failure modes in DMD, yielding substantial improvements in visual fidelity, temporal coherence, and physical plausibility in few-step video generation. We also apply our method to the autoregressive video generation settings under the \textit{Self Forcing} frameworks \cite{huang2025selfforcing}, prelimary results show our method still show the same improvements over the DMD method

\section{Limitation}
\label{sec:limitation}
While DFD substantially improves few-step video generation, its performance remains limited under extremely constrained generation budget. 
As shown in \cref{fig:limitation}, when the generation is restricted to two steps or fewer, 
the two-step distilled Cosmos-Predict2.5-2B model still struggles to produce high-quality videos: fast-moving objects, such as the man's hands, appear blurry; fine details, such as the woman's facial features, lose fidelity; and the model sometimes collapses to nearly static videos with little motion. 
These results suggest that, despite the improvements brought by DFD, generating high-quality and temporally dynamic videos remains challenging in the highly aggressive two-step-or-fewer regime.

\begin{figure*}[htbp]
\begin{center}
    \includegraphics[width=1\textwidth]{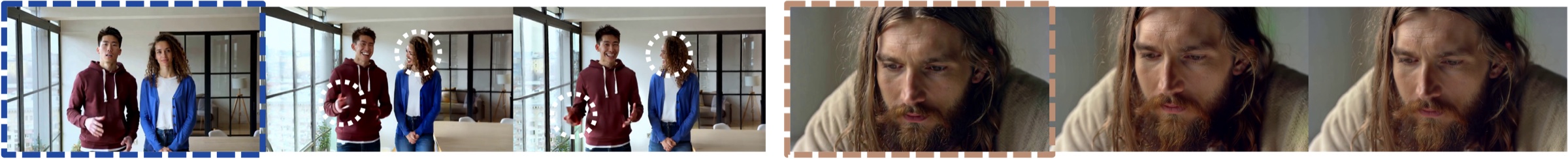}
\end{center}
\caption{\textbf{Limitation of DFD under a two-step generation budget.} With two-step distillation of the Cosmos-Predict2.5-2B model, DFD still produces blurry results for fast-moving content (e.g., the man's hands), loses fine detail (e.g., the woman's face), or collapses to highly static videos.}
\label{fig:limitation}
\end{figure*}

\bibliographystyle{unsrt}
\bibliography{reference}

@article{poole2022dreamfusion,
  title={Dreamfusion: Text-to-3d using 2d diffusion},
  author={Poole, Ben and Jain, Ajay and Barron, Jonathan T and Mildenhall, Ben},
  journal={arXiv preprint arXiv:2209.14988},
  year={2022}
}

@article{wang2023prolificdreamer,
  title={Prolificdreamer: High-fidelity and diverse text-to-3d generation with variational score distillation},
  author={Wang, Zhengyi and Lu, Cheng and Wang, Yikai and Bao, Fan and Li, Chongxuan and Su, Hang and Zhu, Jun},
  journal={Advances in neural information processing systems},
  volume={36},
  pages={8406--8441},
  year={2023}
}

@article{wan2025wan,
  title={Wan: Open and advanced large-scale video generative models},
  author={Wan, Team and Wang, Ang and Ai, Baole and Wen, Bin and Mao, Chaojie and Xie, Chen-Wei and Chen, Di and Yu, Feiwu and Zhao, Haiming and Yang, Jianxiao and others},
  journal={arXiv preprint arXiv:2503.20314},
  year={2025}
}

@inproceedings{huang2025vipe,
    title={ViPE: Video Pose Engine for 3D Geometric Perception},
    author={Huang, Jiahui and Zhou, Qunjie and Rabeti, Hesam and Korovko, Aleksandr and Ling, Huan and Ren, Xuanchi and Shen, Tianchang and Gao, Jun and Slepichev, Dmitry and Lin, Chen-Hsuan and Ren, Jiawei and Xie, Kevin and Biswas, Joydeep and Leal-Taixe, Laura and Fidler, Sanja},
    booktitle={NVIDIA Research Whitepapers arXiv:2508.10934},
    year={2025}
}

@article{yin2024improved,
  title={Improved distribution matching distillation for fast image synthesis},
  author={Yin, Tianwei and Gharbi, Micha{\"e}l and Park, Taesung and Zhang, Richard and Shechtman, Eli and Durand, Fredo and Freeman, William T},
  journal={Advances in neural information processing systems},
  volume={37},
  pages={47455--47487},
  year={2024}
}

@article{wu2026diversity,
  title={Diversity-Preserved Distribution Matching Distillation for Fast Visual Synthesis},
  author={Wu, Tianhe and Li, Ruibin and Zhang, Lei and Ma, Kede},
  journal={arXiv preprint arXiv:2602.03139},
  year={2026}
}

@inproceedings{huang2024vbench,
  title={Vbench: Comprehensive benchmark suite for video generative models},
  author={Huang, Ziqi and He, Yinan and Yu, Jiashuo and Zhang, Fan and Si, Chenyang and Jiang, Yuming and Zhang, Yuanhan and Wu, Tianxing and Jin, Qingyang and Chanpaisit, Nattapol and others},
  booktitle={Proceedings of the IEEE/CVF Conference on Computer Vision and Pattern Recognition},
  pages={21807--21818},
  year={2024}
}

@article{song2020score,
  title={Score-based generative modeling through stochastic differential equations},
  author={Song, Yang and Sohl-Dickstein, Jascha and Kingma, Diederik P and Kumar, Abhishek and Ermon, Stefano and Poole, Ben},
  journal={arXiv preprint arXiv:2011.13456},
  year={2020}
}

@article{lipman2022flow,
  title={Flow matching for generative modeling},
  author={Lipman, Yaron and Chen, Ricky TQ and Ben-Hamu, Heli and Nickel, Maximilian and Le, Matt},
  journal={arXiv preprint arXiv:2210.02747},
  year={2022}
}

@inproceedings{rombach2022high,
  title={High-resolution image synthesis with latent diffusion models},
  author={Rombach, Robin and Blattmann, Andreas and Lorenz, Dominik and Esser, Patrick and Ommer, Bj{\"o}rn},
  booktitle={Proceedings of the IEEE/CVF conference on computer vision and pattern recognition},
  pages={10684--10695},
  year={2022}
}

@article{cosmos-predict2p5,
  author  = {Ali, Arslan and Bai, Junjie and Bala, Maciej and Balaji, Yogesh and Blakeman, Aaron and Cai, Tiffany and Cao, Jiaxin and Cao, Tianshi and Cha, Elizabeth and Chao, Yu-Wei and other},
  title   = {World Simulation with Video Foundation Models for Physical {AI}},
  journal = {arXiv preprint arXiv:2511.00062},
  year    = {2025},
}

@article{kong2024hunyuanvideo,
  title={Hunyuanvideo: A systematic framework for large video generative models},
  author={Kong, Weijie and Tian, Qi and Zhang, Zijian and Min, Rox and Dai, Zuozhuo and Zhou, Jin and Xiong, Jiangfeng and Li, Xin and Wu, Bo and Zhang, Jianwei and others},
  journal={arXiv preprint arXiv:2412.03603},
  year={2024}
}

@article{yang2024cogvideox,
  title={Cogvideox: Text-to-video diffusion models with an expert transformer},
  author={Yang, Zhuoyi and Teng, Jiayan and Zheng, Wendi and Ding, Ming and Huang, Shiyu and Xu, Jiazheng and Yang, Yuanming and Hong, Wenyi and Zhang, Xiaohan and Feng, Guanyu and others},
  journal={arXiv preprint arXiv:2408.06072},
  year={2024}
}

@article{brooks2024video,
  title={Video generation models as world simulators},
  author={Brooks, Tim and Peebles, Bill and Holmes, Connor and DePue, Will and Guo, Yufei and Jing, Leo and Schnurr, David and Taylor, Joe and Luhman, Troy and Luhman, Eric and others},
  journal={OpenAI Blog},
  volume={1},
  number={8},
  pages={1},
  year={2024}
}

@article{salimans2022progressive,
  title={Progressive distillation for fast sampling of diffusion models},
  author={Salimans, Tim and Ho, Jonathan},
  journal={arXiv preprint arXiv:2202.00512},
  year={2022}
}

@article{song2023consistency,
  title={Consistency models},
  author={Song, Yang and Dhariwal, Prafulla and Chen, Mark and Sutskever, Ilya},
  year={2023}
}

@inproceedings{yin2024one,
  title={One-step diffusion with distribution matching distillation},
  author={Yin, Tianwei and Gharbi, Micha{\"e}l and Zhang, Richard and Shechtman, Eli and Durand, Fredo and Freeman, William T and Park, Taesung},
  booktitle={Proceedings of the IEEE/CVF conference on computer vision and pattern recognition},
  pages={6613--6623},
  year={2024}
}

@article{luo2023diff,
  title={Diff-instruct: A universal approach for transferring knowledge from pre-trained diffusion models},
  author={Luo, Weijian and Hu, Tianyang and Zhang, Shifeng and Sun, Jiacheng and Li, Zhenguo and Zhang, Zhihua},
  journal={Advances in Neural Information Processing Systems},
  volume={36},
  pages={76525--76546},
  year={2023}
}

@article{wang2025uni,
  title={Uni-instruct: One-step diffusion model through unified diffusion divergence instruction},
  author={Wang, Yifei and Bai, Weimin and Zhang, Colin and Zhang, Debing and Luo, Weijian and Sun, He},
  journal={arXiv preprint arXiv:2505.20755},
  year={2025}
}

@article{zheng2025large,
  title={Large scale diffusion distillation via score-regularized continuous-time consistency},
  author={Zheng, Kaiwen and Wang, Yuji and Ma, Qianli and Chen, Huayu and Zhang, Jintao and Balaji, Yogesh and Chen, Jianfei and Liu, Ming-Yu and Zhu, Jun and Zhang, Qinsheng},
  journal={arXiv preprint arXiv:2510.08431},
  year={2025}
}

@article{luo2023latent,
  title={Latent consistency models: Synthesizing high-resolution images with few-step inference},
  author={Luo, Simian and Tan, Yiqin and Huang, Longbo and Li, Jian and Zhao, Hang},
  journal={arXiv preprint arXiv:2310.04378},
  year={2023}
}

@inproceedings{meng2023distillation,
  title={On distillation of guided diffusion models},
  author={Meng, Chenlin and Rombach, Robin and Gao, Ruiqi and Kingma, Diederik and Ermon, Stefano and Ho, Jonathan and Salimans, Tim},
  booktitle={Proceedings of the IEEE/CVF conference on computer vision and pattern recognition},
  pages={14297--14306},
  year={2023}
}

@article{li2024t2v,
  title={T2v-turbo: Breaking the quality bottleneck of video consistency model with mixed reward feedback},
  author={Li, Jiachen and Feng, Weixi and Fu, Tsu-Jui and Wang, Xinyi and Basu, Sugato and Chen, Wenhu and Wang, William Y},
  journal={Advances in neural information processing systems},
  volume={37},
  pages={75692--75726},
  year={2024}
}

@article{song2023improved,
  title={Improved techniques for training consistency models},
  author={Song, Yang and Dhariwal, Prafulla},
  journal={arXiv preprint arXiv:2310.14189},
  year={2023}
}

@article{lu2024simplifying,
  title={Simplifying, stabilizing and scaling continuous-time consistency models},
  author={Lu, Cheng and Song, Yang},
  journal={arXiv preprint arXiv:2410.11081},
  year={2024}
}

@inproceedings{geng2024consistency,
  title={Consistency models made easy},
  author={Geng, Zhengyang and Pokle, Ashwini and Luo, Weijian and Lin, Justin and Kolter, J Zico},
  booktitle={The Thirteenth International Conference on Learning Representations},
  year={2024}
}

@article{geng2025mean,
  title={Mean flows for one-step generative modeling},
  author={Geng, Zhengyang and Deng, Mingyang and Bai, Xingjian and Kolter, J Zico and He, Kaiming},
  journal={arXiv preprint arXiv:2505.13447},
  year={2025}
}

@article{sabour2025align,
  title={Align your flow: Scaling continuous-time flow map distillation},
  author={Sabour, Amirmojtaba and Fidler, Sanja and Kreis, Karsten},
  journal={arXiv preprint arXiv:2506.14603},
  year={2025}
}

@article{kim2023consistency,
  title={Consistency trajectory models: Learning probability flow ode trajectory of diffusion},
  author={Kim, Dongjun and Lai, Chieh-Hsin and Liao, Wei-Hsiang and Murata, Naoki and Takida, Yuhta and Uesaka, Toshimitsu and He, Yutong and Mitsufuji, Yuki and Ermon, Stefano},
  journal={arXiv preprint arXiv:2310.02279},
  year={2023}
}

@article{wang2024phased,
  title={Phased consistency models},
  author={Wang, Fu-Yun and Huang, Zhaoyang and Bergman, Alexander W and Shen, Dazhong and Gao, Peng and Lingelbach, Michael and Sun, Keqiang and Bian, Weikang and Song, Guanglu and Liu, Yu and others},
  journal={Advances in neural information processing systems},
  volume={37},
  pages={83951--84009},
  year={2024}
}

@inproceedings{zhou2024score,
  title={Score identity distillation: Exponentially fast distillation of pretrained diffusion models for one-step generation},
  author={Zhou, Mingyuan and Zheng, Huangjie and Wang, Zhendong and Yin, Mingzhang and Huang, Hai},
  booktitle={Forty-first International Conference on Machine Learning},
  year={2024}
}

@article{xu2025one,
  title={One-step Diffusion Models with $ f $-Divergence Distribution Matching},
  author={Xu, Yilun and Nie, Weili and Vahdat, Arash},
  journal={arXiv preprint arXiv:2502.15681},
  year={2025}
}

@article{simeoni2025dinov3,
  title={Dinov3},
  author={Sim{\'e}oni, Oriane and Vo, Huy V and Seitzer, Maximilian and Baldassarre, Federico and Oquab, Maxime and Jose, Cijo and Khalidov, Vasil and Szafraniec, Marc and Yi, Seungeun and Ramamonjisoa, Micha{\"e}l and others},
  journal={arXiv preprint arXiv:2508.10104},
  year={2025}
}

@inproceedings{radford2021learning,
  title={Learning transferable visual models from natural language supervision},
  author={Radford, Alec and Kim, Jong Wook and Hallacy, Chris and Ramesh, Aditya and Goh, Gabriel and Agarwal, Sandhini and Sastry, Girish and Askell, Amanda and Mishkin, Pamela and Clark, Jack and others},
  booktitle={International conference on machine learning},
  pages={8748--8763},
  year={2021},
  organization={PmLR}
}

@article{luo2024one,
  title={One-step diffusion distillation through score implicit matching},
  author={Luo, Weijian and Huang, Zemin and Geng, Zhengyang and Kolter, J Zico and Qi, Guo-jun},
  journal={Advances in Neural Information Processing Systems},
  volume={37},
  pages={115377--115408},
  year={2024}
}

@article{Nie2026TransitionMD,
  title={Transition Matching Distillation for Fast Video Generation},
  author={Weili Nie and Julius Berner and Nanye Ma and Chao Liu and Saining Xie and Arash Vahdat},
  journal={ArXiv},
  year={2026},
  volume={abs/2601.09881},
  url={https://api.semanticscholar.org/CorpusID:284738120}
}

@article{goodfellow2014generative,
  title={Generative adversarial nets},
  author={Goodfellow, Ian J and Pouget-Abadie, Jean and Mirza, Mehdi and Xu, Bing and Warde-Farley, David and Ozair, Sherjil and Courville, Aaron and Bengio, Yoshua},
  journal={Advances in neural information processing systems},
  volume={27},
  year={2014}
}

@article{he2024training,
  title={Training neural samplers with reverse diffusive kl divergence},
  author={He, Jiajun and Chen, Wenlin and Zhang, Mingtian and Barber, David and Hern{\'a}ndez-Lobato, Jos{\'e} Miguel},
  journal={arXiv preprint arXiv:2410.12456},
  year={2024}
}

@inproceedings{lu2025adversarial,
  title={Adversarial distribution matching for diffusion distillation towards efficient image and video synthesis},
  author={Lu, Yanzuo and Ren, Yuxi and Xia, Xin and Lin, Shanchuan and Wang, Xing and Xiao, Xuefeng and Ma, Andy J and Xie, Xiaohua and Lai, Jian-Huang},
  booktitle={Proceedings of the IEEE/CVF International Conference on Computer Vision},
  pages={16818--16829},
  year={2025}
}

@article{huang2025selfforcing,
  title={Self Forcing: Bridging the Train-Test Gap in Autoregressive Video Diffusion},
  author={Huang, Xun and Li, Zhengqi and He, Guande and Zhou, Mingyuan and Shechtman, Eli},
  journal={arXiv preprint arXiv:2506.08009},
  year={2025}
}

\appendix

\newpage
\appendix
\section{Additional Theory}
\label{app: additional theory}
\subsection{teacher score discrepancy}
A good regularizer must not bias the solution: if the student is already perfect, the regularizer should contribute zero gradient. We show this holds for $\Delta_{p_{\mathrm{real}},\bm{x},G_\theta(\bm{z})}$ in expectation.

\textbf{Proposition.} If the student matches the data distribution exactly ($p_{\mathrm{fake}} = p_{\mathrm{real}}$), then $\E_{\bm{x},\bm{z}}[\Delta_{p_{\mathrm{real}},\bm{x},G_\theta(\bm{z})}] = 0$.

\textbf{Proof.} When the student is perfect, $\bm{x} = G_\theta(\bm{z})$ is a sample from $p_{\mathrm{real}}$, and $\hat{\bm{x}}$ is also a sample from $p_{\mathrm{real}}$. They are i.i.d.\ and therefore \emph{exchangeable}. With shared $\bm{\epsilon}$,
\begin{equation}
\bm{x}^{\mathrm{fake}}_{t} = \alpha_t \bm{x} + \sigma_t \bm{\epsilon} \quad \overset{d}{=} \quad \alpha_t \hat{\bm{x}} + \sigma_t \bm{\epsilon} = \bm{x}^{\mathrm{real}}_{t}.
\end{equation}
Both have the same distribution $p_{\text{real}, t}$, so

\begin{equation}
\E_{\bm{x},\bm{z}}[\Delta_{p_{\mathrm{real}},\bm{x},G_\theta(\bm{z})}] = \E_{\bm{x},\bm{z}}[\nabla_{\bm{x}} \log p_{\mathrm{real}}(\bm{x}^{\mathrm{real}}_{t})] - \E_{\bm{x},\bm{z}}[\nabla_{\bm{x}} \log p_{\mathrm{real}}(\bm{x}^{\mathrm{fake}}_{t}))] = 0.
\end{equation}

\textbf{$\Delta_{p_{\mathrm{real}},\bm{x},G_\theta(\bm{z})}$ is a zero-mean stochastic perturbation at the optimum.} DFD does not bias convergence: it adds a regularizer that vanishes in expectation precisely when the student is correct. \hfill$\square$

\subsection{Bounded MSE between $g_{\mathrm{DFD}}(\theta)$ and $\nabla_\theta D_{\mathrm{KL}}(p_{\mathrm{fake}} \| p_{\mathrm{real}})$}

\subsubsection*{Setup}

In score-distillation methods, the teacher score is never evaluated on clean inputs. Following standard practice, samples are perturbed by the diffusion forward process to a noise level $t \in (0, T]$ before being passed to the teacher. Let

\begin{equation}
\bm{x}_t := \alpha_t \bm{x} + \sigma_t \bm{\epsilon}, \bm{x} \sim p_{\mathrm{real}},\qquad \tilde{\bm{x}}_t := \alpha_t G_\theta(\bm{z}, c) + \sigma_t \bm{\epsilon}, \qquad \bm{\epsilon} \sim \mathcal{N}(\bm{0}, \bm{I}),
\end{equation}

where $(\alpha_t, \sigma_t)$ are the diffusion schedule coefficients. The relevant scores are those of the noised marginals,

\begin{equation}
s_{\mathrm{real}}(\bm{x}_t \mid c, t) := \nabla_{\bm{x}_t}\log p_{\mathrm{real}, t}(\bm{x}_t \mid c), \qquad
s_{\mathrm{fake}}(\tilde{\bm{x}}_t \mid c, t) := \nabla_{\tilde{\bm{x}}_t}\log p_{\mathrm{fake}, t}(\tilde{\bm{x}}_t \mid c),
\end{equation}

and the teacher is a neural network $s_\phi(\cdot, c, t)$ approximating $s_{\mathrm{real}}(\cdot \mid c, t)$.

The DFD and KL gradient at noise level $t$ are:
\begin{align}
g_{\mathrm{DFD}}(\theta) &= \E_{\bm{x}, \bm{z}, \bm{\epsilon}}\!\left[\bigl(s_{\mathrm{fake}}(\tilde{\bm{x}}_t \mid c, t) - s_{\mathrm{real}}(\bm{x}_t \mid c, t)\bigr)\nabla_\theta G_\theta(\bm{z},c)\right], \label{eq:rfsm}\\
\nabla_\theta D_{\mathrm{KL}} &= \E_{\bm{z}, \bm{\epsilon}}\!\left[\bigl(s_{\mathrm{fake}}(\tilde{\bm{x}}_t \mid c, t) - s_{\mathrm{real}}(\tilde{\bm{x}}_t \mid c, t)\bigr)\nabla_\theta G_\theta(\bm{z},c)\right]. \label{eq:kl}
\end{align}
(For brevity we have absorbed any $\alpha_t$ chain-rule factors into $\nabla_\theta G_\theta$ without changing the structure of the argument.)

\subsubsection*{Condition}

\begin{condition}[Bounded conditional matching error]
\label{ass:matching}
There exists $\delta(c) \geq 0$ such that
\begin{equation}
\E\!\left[\norm{\bm{x} - G_\theta(\bm{z},c)}^{2} \;\big|\; c\right] \leq \delta(c)^{2}.
\end{equation}
\end{condition}

\begin{condition}[Lipschitz teacher score on the noised diffusion path]
\label{cond:lipschitz}
At each noise level $t > 0$, the teacher score network $s_{\mathrm{real}}(\cdot \mid c, t)$ is $L(c, t)$-Lipschitz:

\begin{equation}
\norm{s_{\mathrm{real}}(\bm{u} \mid c, t) - s_{\mathrm{real}}(\bm{v} \mid c, t)} \leq L(c, t)\,\norm{\bm{u} - \bm{v}}, \qquad \forall \bm{u}, \bm{v} \in \R^{d}.
\end{equation}

This is justified for two complementary reasons: (i) the noised marginal $p_{\mathrm{real}, t}$ is the convolution of $p_{\mathrm{real}}$ with a Gaussian of variance $\sigma_t^2$, whose score has Hessian operator norm bounded by $\mathcal{O}(1/\sigma_t^2)$; (ii) the score is parameterized by a neural network with Lipschitz architectural primitives, so it inherits a finite Lipschitz constant on any bounded domain.
\end{condition}

\begin{condition}[Bounded generator Jacobian]
\label{ass:jac}
$\norm{\nabla_\theta G_\theta(\bm{z},c)}_{\mathrm{op}} \leq B$ almost surely.
\end{condition}

\subsubsection*{Main result}

\begin{proposition}
Under Condition~\ref{ass:matching}--\ref{ass:jac}, for every noise level $t > 0$,

\begin{equation}    
\bigl\|g_{\mathrm{DFD}}(\theta) - \nabla_\theta D_{\mathrm{KL}}(p_{\mathrm{fake}} \| p_{\mathrm{real}})\bigr\|_{2}^{2}
\;\leq\;
B^{2}\,\alpha_t^{2}\,L(c, t)^{2}\,\delta(c)^{2}.
\end{equation}

\end{proposition}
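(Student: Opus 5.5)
Subtracting the two displayed gradients \eqref{eq:rfsm} and \eqref{eq:kl}, the fake-score terms $s_{\mathrm{fake}}(\tilde{\bm{x}}_t \mid c,t)$ are identical in both (same $\bm{z}$, same $\bm{\epsilon}$), so they cancel pointwise. Because \eqref{eq:kl} does not depend on $\bm{x}$, I can view it as an expectation over the same joint law of $(\bm{x},\bm{z},\bm{\epsilon})$ as \eqref{eq:rfsm}. The difference is then a single expectation:
\begin{equation*}
g_{\mathrm{DFD}}(\theta) - \nabla_\theta D_{\mathrm{KL}} \;=\; \E_{\bm{x},\bm{z},\bm{\epsilon}}\!\left[\bigl(s_{\mathrm{real}}(\tilde{\bm{x}}_t \mid c,t) - s_{\mathrm{real}}(\bm{x}_t \mid c,t)\bigr)\nabla_\theta G_\theta(\bm{z},c)\right].
\end{equation*}
This is exactly the negative of the teacher score discrepancy term, evaluated on the noised path and paired through the shared $\bm{\epsilon}$.

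Next I take norms and move the square inside the expectation. Jensen's inequality for the convex map $v\mapsto\|v\|_2^2$ gives $\|\E[Y]\|^2 \le \E[\|Y\|^2]$. I then bound the integrand pointwise. For the Jacobian factor I use Condition~\ref{ass:jac}; the relevant quantity is the adjoint (transpose) acting on the score vector, and it has the same operator norm, so the factor is at most $B\,\|s_{\mathrm{real}}(\tilde{\bm{x}}_t)-s_{\mathrm{real}}(\bm{x}_t)\|$. For the score difference I use Condition~\ref{cond:lipschitz}, which gives at most $L(c,t)\,\|\tilde{\bm{x}}_t-\bm{x}_t\|$. The key observation is that the shared noise cancels exactly: $\tilde{\bm{x}}_t-\bm{x}_t = \alpha_t\bigl(G_\theta(\bm{z},c)-\bm{x}\bigr)$. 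The integrand is therefore bounded by $B^2\alpha_t^2L(c,t)^2\|\bm{x}-G_\theta(\bm{z},c)\|^2$.

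Taking the conditional expectation given $c$ and applying Condition~\ref{ass:matching} then yields $B^2\alpha_t^2L(c,t)^2\delta(c)^2$, which is the claimed bound.

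The main obstacle is not the inequalities but the coupling. The bound works only because the real and generated samples are noised with the \emph{same} $\bm{\epsilon}$ and are jointly drawn, as in the pseudo-code. If $\bm{\epsilon}$ were independent across the two branches, the difference would pick up a $\sigma_t(\bm{\epsilon}-\bm{\epsilon}')$ term and the bound would fail. I would therefore state explicitly that the expectation in \eqref{eq:kl} is read under the joint law with the shared $\bm{\epsilon}$.

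Two further points need care. First, Condition~\ref{ass:jac} must be applied to the adjoint in the correct direction. Second, the scalar constants absorbed by the chain rule are consistent between the two gradients, so they enter only as a common factor inside $B$.
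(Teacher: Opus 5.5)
Your proposal is correct and follows the paper's proof step for step. You cancel the fake-score terms because \eqref{eq:kl} does not depend on $\bm{x}$, apply Jensen to $\|\cdot\|_2^2$, bound the integrand by $B$ and $L(c,t)$ via the shared-noise identity $\tilde{\bm{x}}_t-\bm{x}_t=\alpha_t(G_\theta(\bm{z},c)-\bm{x})$, and finish with Condition~\ref{ass:matching}.

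One correction to your closing remark: the bound would not fail with independent noises in the two branches. Each term of $g_{\mathrm{DFD}}$ is a separate expectation whose value depends only on the law of its own arguments, so independent noises give the same $g_{\mathrm{DFD}}$. The shared $\bm{\epsilon}$ is therefore a coupling you are free to choose inside the proof; it affects the variance of the one-sample estimator, not the bound.
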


\begin{proof}
\textbf{Step 1: Cancellation of the fake score.}
The fake-score term in \eqref{eq:rfsm} depends only on $(\bm{z}, \bm{\epsilon}, c)$, not on $\bm{x}$, so $\E_{\bm{x},\bm{z},\bm{\epsilon}\mid c}[\,\cdot\,] = \E_{\bm{z},\bm{\epsilon}\mid c}[\,\cdot\,]$ for that term. It cancels with the corresponding term in \eqref{eq:kl}, giving

\begin{equation}
g_{\mathrm{DFD}}(\theta) - \nabla_\theta D_{\mathrm{KL}}
= \E_{\bm{x}, \bm{z}, \bm{\epsilon} \mid c}\!\left[\bigl(s_{\mathrm{real}}(\tilde{\bm{x}}_t \mid c, t) - s_{\mathrm{real}}(\bm{x}_t \mid c, t)\bigr)\nabla_\theta G_\theta(\bm{z}, c)\right].
\end{equation}

\textbf{Step 2: Jensen's inequality.}
Since $\norm{\cdot}_2^2$ is convex,
\begin{equation}
\bigl\|g_{\mathrm{DFD}}(\theta) - \nabla_\theta D_{\mathrm{KL}}\bigr\|_{2}^{2}
\;\leq\;
\E_{\bm{x}, \bm{z}, \bm{\epsilon} \mid c}\!\left[\norm{\bigl(s_{\mathrm{real}}(\tilde{\bm{x}}_t \mid c, t) - s_{\mathrm{real}}(\bm{x}_t \mid c, t)\bigr)\nabla_\theta G_\theta(\bm{z},c)}_{2}^{2}\right].
\end{equation}

\textbf{Step 3: Bounding the integrand.}
Using sub-multiplicativity, Condition~\ref{cond:lipschitz}, and Condition~\ref{ass:jac},
\begin{equation}
\begin{aligned}
\norm{\bigl(s_{\mathrm{real}}(\tilde{\bm{x}}_t \mid c, t) - s_{\mathrm{real}}(\bm{x}_t \mid c, t)\bigr)\nabla_\theta G_\theta(\bm{z},c)}_{2}^{2}
&\leq B^{2}\,\norm{s_{\mathrm{real}}(\tilde{\bm{x}}_t \mid c, t) - s_{\mathrm{real}}(\bm{x}_t \mid c, t)}^{2} \\
&\leq B^{2}\,L(c, t)^{2}\,\norm{\tilde{\bm{x}}_t - \bm{x}_t}^{2}.
\end{aligned}
\end{equation}
Since $\tilde{\bm{x}}_t - \bm{x}_t = \alpha_t\bigl(G_\theta(\bm{z},c) - \bm{x}\bigr)$ (the noise term $\sigma_t \bm{\epsilon}$ is shared and cancels),
\begin{equation}
\norm{\tilde{\bm{x}}_t - \bm{x}_t}^{2} = \alpha_t^{2}\,\norm{G_\theta(\bm{z},c) - \bm{x}}^{2}.
\end{equation}

\textbf{Step 4: Apply the matching bound.}
Combining the previous steps,
\begin{equation}
\begin{aligned}
\bigl\|g_{\mathrm{DFD}}(\theta) - \nabla_\theta D_{\mathrm{KL}}\bigr\|_{2}^{2}
&\leq B^{2}\,\alpha_t^{2}\,L(c, t)^{2}\,\E\!\left[\norm{\bm{x} - G_\theta(\bm{z},c)}^{2}\,\big|\,c\right] \\
&\leq B^{2}\,\alpha_t^{2}\,L(c, t)^{2}\,\delta(c)^{2}. \qedhere
\end{aligned}
\end{equation}
\textbf{which is equavalent to:}
\begin{equation}
\begin{aligned}
\E\!\bigl[\bigl\|\Delta_{p_{\mathrm{real}},\bm{x},G_\theta(\bm{z})}\bigr\|_{2}^{2} \,\big|\, c\bigr]
    \;\lesssim\; \alpha_t^{2}\,L(c,t)^{2}\,\delta(c)^{2}
\end{aligned}
\end{equation}
\end{proof}





\section{Experiment Details}
\label{app: experiment detail}


\subsection{Data Curation}
\label{app sub: dataset curation}
We mainly curated two datasets for our experiments:
\begin{itemize}
    \item \emph{animation dataset}: video clips with an animation or cartoon visual style.
    \item \emph{mix-style dataset}: video clips spanning all visual styles.
\end{itemize}
Both are derived from the ViPE dataset~\cite{huang2025vipe}, starting from $\sim$966k captioned source clips. Our pipeline has five stages: (1) tri-level captioning, (2) CLIP visual feature extraction, (3) $K$-means clustering, (4) per-cluster grid visualization with human-in-the-loop selection, and (5) WebDataset packaging at three caption granularities. The full pipeline code is released alongside the paper.

\subsubsection{Tri-Level Captioning.}
For each source video we generate three captions of different lengths in a single VLM forward pass, using \textbf{Qwen3-VL-4B-Instruct} served via vLLM with continuous batching and PagedAttention. A single system prompt instructs the model to emit
\begin{itemize}
    \item \texttt{[LONG]}: a $250$--$300$ word caption covering subject appearance, actions, expressions, environment, camera work, and visual style;
    \item \texttt{[MEDIUM]}: a $150$--$180$ word caption that retains the key subject, actions, setting, and style;
    \item \texttt{[SHORT]}: a single sentence capturing the essence of the clip.
\end{itemize}
Captions are decoded with sampling parameters $T=0.7$, $\text{top-}p=0.8$, and \texttt{max\_new\_tokens}$=512$. Producing all three lengths in one forward pass (rather than three independent passes) reduces VLM cost roughly $3\times$ since the visual tokens are encoded only once. The three captions are parsed from the tagged output via regular expressions; if any tag is missing we fall back deterministically (truncate the long caption for medium, take the first sentence for short). Each caption is also encoded with the UMT5-XXL text encoder (Wan2.1-T2V-1.3B checkpoint) and the resulting prompt embeddings are stored alongside the raw text.

\subsubsection{Visual Feature Extraction.}
For diversity sampling we want a feature space that reflects visual content rather than caption phrasing. We therefore extract CLIP ViT-B/32 image feature. For each video we decode $3$ uniformly-spaced keyframes with PyAV, apply the standard CLIP preprocessing (\texttt{Resize}(224), \texttt{CenterCrop}(224), \texttt{ToTensor}, CLIP normalization), encode each keyframe with \texttt{encode\_image}, mean-pool the three frame embeddings, and $\ell_2$-normalize, yielding one 512-d vector per video. 

\subsubsection{Clustering.}
We cluster the $\sim$966k feature vectors with \textbf{FAISS $K$-means} ($K{=}1000$, $50$ iterations). Because all vectors are $\ell_2$-normalized, squared Euclidean distance to the centroid is monotonically related to cosine similarity, so we use it directly for ranking inside each cluster. The output is a per-video integer cluster label and a $(K{,}512)$ centroid matrix.

\subsubsection{Per-Cluster Visualization and Human-in-the-Loop Selection.}
For each of the $K{=}1000$ clusters we (i) sort all members by Euclidean distance from the centroid, (ii) pick $40$ candidates at evenly-spaced percentiles of that distance (so row $0$ is the centroid-nearest video and row $39$ is the farthest), and (iii) decode $5$ evenly-spaced frames per candidate. 
A human reviewer then opens each cluster grid and writes the row indices of $20$ visually-representative-but-distinct videos into the \texttt{picks} column, e.g.\ \texttt{"0,2,5,7,9,12,...,38,39"}. The even-percentile sampling ensures the $40$ candidates already span the full intra-cluster spread, so the reviewer is choosing among visually distinct options rather than near-duplicates. Clusters whose grid contains no useful content (e.g., uniformly black frames, decode-failed clusters) can be left empty and are dropped from the final dataset; this trades a small number of slots for higher quality. After review we obtain $30{,}000$ selected videos in cluster-id order.


The animation dataset is constructed by the same filtering pipeline, restricted to clusters whose grid visualizations are dominated by animation/cartoon style during the human review step.

\subsection{Baseline Details}
\label{app sub: baseline detail}
We show the implementation details of our baselines in Table~\ref{tab:configs}.
\begin{table*}[htbp]
\centering
\caption{Training configurations.}
\label{tab:configs}
\resizebox{\textwidth}{!}{ 
\begin{tabular}{lccc|ccc}
\toprule
\multirow{2}{*}{\textbf{Models}} & \multicolumn{3}{c|}{\textbf{Cosmos Predict2 I2V}} & \multicolumn{3}{c}{\textbf{Wan2.1 T2V}} \\
\cmidrule(lr){2-4} \cmidrule(l){5-7}
& \textbf{DMD2} & \textbf{DP-DMD} & \textbf{DFD} & \textbf{DMD2} & \textbf{DP-DMD} &\textbf{DFD} \\
\midrule
Num. of Frame & 81 & 81 & 81 & 81 & 81 & 81 \\
Batch Size & 16 & 16 & 16 & 16 & 16& 16 \\
Resolution &  480p& 480p& 480p&480p&480p&480p \\
Learning Rate(discriminator) & 1e-05 & 1e-05 & N/A & 1e-05 & 1e-05 & N/A \\
Learning Rate (student) & 1e-05 & 1e-05 & 1e-05 & 1e-05 & 1e-05 & 1e-05\\
Learning Rate (fake score)& 1e-05 & 1e-05 & 1e-05 & 1e-05 & 1e-05 & 1e-05\\
CFG Scale & 3.0& 3.0 & 3.0 & 5.0 & 5.0& 5.0 \\
Student Update Frequency & 5 & 5 & 5 & 5 & 5& 5 \\
Diversity anchor step  & N/A & 1 & N/A & N/A & 5 &N/A\\
Diversity weights    &     N/A &   0.05  &  N/A   &   N/A  & $0.05$ &N/A \\

Total Iterations & 30k & 30k & 300 & 30k & 30k & 100\\
Pretrained Model Iterations & 0 & 0 & 25k(from DMD2) & 0 & 0 & 25k(from DMD2)\\

\bottomrule
\end{tabular}
}
\end{table*}
For the text-to-video tasks, all the baseline and our method are trained on 8 A100 GPUs. For the image-to-video tasks, all the baseline and our method are trained on 16 A100 GPUs.
\subsection{Text-to-Video}
\label{app sub: t2v}
\paragraph{Metric Details}
We show the metric details here. For each text prompt we sample $L=8$ videos from distinct initial noise seeds, and report three families of metrics: a visual diversity metric adapted from DP-DMD, a set of VBench~\cite{huang2024vbench} quality dimensions, and camera-pose statistics extracted with ViPE~\cite{huang2025vipe}.

\subsubsection{Visual Diversity Metric.} We adopt the diversity metric from DP-DMD \cite{wu2026diversity}. We assess sample diversity using DINOv2-ViT-Large (DINO)~\cite{simeoni2025dinov3} and CLIP-ViT-Large (CLIP)~\cite{radford2021learning} by computing the cosine similarity between extracted image feature representations in a pairwise manner:
\begin{equation}
\mathrm{Diversity}
=1-\frac{2}{L(L-1)}\sum_{i,\,j}
\cos\Big(\bm{x}^{(i)}_\theta,\bm{x}^{(j)}_\theta\Big),
\label{eq:div_eval}
\end{equation}
where $L$ denotes the number of distinct initial noise samples per text prompt\footnote{For each text prompt, every pair of generated samples is compared once, yielding $\binom{L}{2}$ total comparisons.}, and we set $L=8$ in our experiments. To adapt this image-level metric to videos, we sample $16$ evenly-spaced frames from each generated clip, extract per-frame CLIP and DINO embeddings, and mean-pool them into a single feature vector $\bm{x}^{(i)}_\theta$ per video before applying Eq.~\ref{eq:div_eval}. The reported diversity is the average over all text prompts.

\subsubsection{VBench Quality Metrics.} 
For t2v tasks, We evaluate generation quality with VBench~\cite{huang2024vbench} which scores videos directly without requiring access to the original training prompts. We report the seven prompt-free dimensions:
\begin{itemize}
    \item \emph{subject consistency}: measures whether the main subject's appearance (identity, shape, texture) remains stable across frames, computed via DINO feature similarity between frames.
    \item \emph{background consistency}: measures temporal stability of the background scene across frames using CLIP feature similarity, capturing flicker or drift in the surrounding environment.
    \item \emph{temporal flickering}: penalizes high-frequency, low-level fluctuations between adjacent frames in static or near-static regions, reflecting visible flicker artifacts.
    \item \emph{motion smoothness}: assesses whether motion follows the priors of a video frame interpolation model, with smoother (more physically plausible) motion scoring higher.
    \item \emph{dynamic degree}: estimates the amount of motion in the video using optical flow magnitude, rewarding generations that are not nearly-static.
    \item \emph{aesthetic quality}: predicts the human-perceived aesthetic score of sampled frames using the LAION aesthetic predictor.
    \item \emph{imaging quality}: predicts low-level image quality (sharpness, noise, distortion) using the MUSIQ image-quality predictor.
\end{itemize}
For i2v tasks, we evaluate the following metrics provided in VBench i2v part:

\begin{itemize}
    \item \textit{subject consistency:} Measures the model's ability to preserve the identity, appearance, and structural integrity of the primary subject across all frames of the generated video.
    \item \textit{Background Consistency: }Evaluates the temporal stability and visual coherence of the background environment, penalizing unnatural shifts or morphing over time.
    \item \textit{Aesthetic Quality: }Assesses the overall visual appeal, lighting, composition, and artistic fidelity of the individual generated frames.
    \item \textit{Temporal Flickering: }Quantifies the presence of high-frequency visual artifacts or unnatural, rapid changes in texture and lighting between adjacent frames.
    \item \textit{Motion Smoothness:} Evaluates the naturalness, fluidity, and physical plausibility of the movements occurring within the video sequence.
    \item \textit{Image-to-Video Subject:} Specifically for Image-to-Video (I2V) generation tasks, this metric calculates how accurately the subject in the generated video aligns with the reference subject provided in the conditioning input image.
    \item \textit{Image-to-Video Background:} Evaluates how well the generated video maintains and reflects the background elements present in the initial conditioning image.
\end{itemize}
\subsubsection{Camera-Pose Diversity.} To quantify how the camera moves across generated videos, we run ViPE~\cite{huang2025vipe} on every clip to recover an $81$-frame sequence of $4{\times}4$ camera-to-world transformation matrices $\{T_t\}_{t=0}^{80}$, with translation $\bm{p}_t = T_t[{:}3,3]$ and rotation $R_t = T_t[{:}3,{:}3]$. We then derive two groups of metrics from these poses.

\noindent\emph{Per-video motion magnitude} (computed independently for each video):
\begin{itemize}
    \item \emph{translation path length}: $\sum_{t=0}^{N-2}\,\lVert \bm{p}_{t+1}-\bm{p}_t\rVert$, the total distance the camera travels.
    \item \emph{max translation}: $\max_{t}\,\lVert \bm{p}_t-\bm{p}_0\rVert$, the farthest the camera reaches from its starting position.
    \item \emph{rotation total} (degrees): $\sum_{t=0}^{N-2}\,\angle\!\big(R_{t+1} R_t^{\top}\big)$, the accumulated frame-to-frame rotation magnitude.
    \item \emph{max rotation} (degrees): $\max_{t}\,\angle\!\big(R_t R_0^{\top}\big)$, the largest deviation from the initial orientation.
\end{itemize}

\noindent\emph{Cross-seed trajectory diversity} (computed per prompt across the $L=8$ seeds, then averaged over prompts). Let $\bm{e}^{(i)}=\bm{p}^{(i)}_{N-1}$ be the endpoint of the $i$-th seed and $\{\bm{p}^{(i)}_t\}$ its full trajectory:
\begin{itemize}
    \item \emph{endpoint spread}: $\big\lVert \mathrm{std}_i\bm{e}^{(i)} \big\rVert_2$, the $\ell_2$ norm of the per-axis standard deviation of the endpoints.
    \item \emph{mean pairwise endpoint distance}: $\frac{2}{L(L-1)}\sum_{i<j}\lVert \bm{e}^{(i)}-\bm{e}^{(j)}\rVert$, averaged over all $\binom{L}{2}=28$ seed pairs.
    \item \emph{mean pairwise trajectory distance}: $\frac{2}{L(L-1)}\sum_{i<j}\,\frac{1}{N}\sum_{t}\lVert \bm{p}^{(i)}_t-\bm{p}^{(j)}_t\rVert$, the average per-frame distance between trajectories, capturing path-level (not just endpoint) differences.
\end{itemize}

\section{Additional Results}
\label{app: additional results}
\subsection{Additional Results for Text-to-Video Generation}
\label{app sub: additional results for i2v}
We show more visualization results for the text-to-video generation. And \textbf{We refer the readers to the supplementary material for full video comparisons}


\subsubsection{Diversity Visualization Results}
\label{app sub: diversity visualization results}
We further illustrate the diversity advantage of our method by generating videos with $8$ different random seeds under the same text prompt. Fig. ~\ref{fig:appendix_diversity} shows the middle frame of each generated video for DMD2, DP-DMD, and ours. Each row corresponds to a different pair of seeds (rows 1--4 cover seeds $1$--$8$). Within each method, the two columns show two different seeds; within each row, the same two seeds are shown across the three methods. Compared to DMD2 and DP-DMD, our method produces clearly more diverse outputs across seeds, with larger variation in subject pose, layout, and scene composition.

\begin{figure}[htbp]
    \centering
    \setlength{\tabcolsep}{2pt}
    \renewcommand{\arraystretch}{1.1}
    \begin{tabular}{@{}cccccc@{}}
        \multicolumn{2}{c}{\small DMD2} &
        \multicolumn{2}{c}{\small DP-DMD} &
        \multicolumn{2}{c}{\small Ours} \\
        \includegraphics[width=0.155\linewidth]{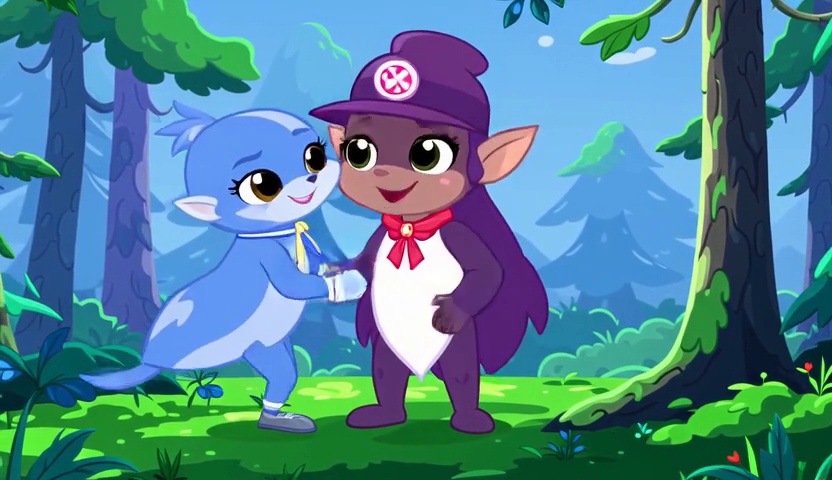} &
        \includegraphics[width=0.155\linewidth]{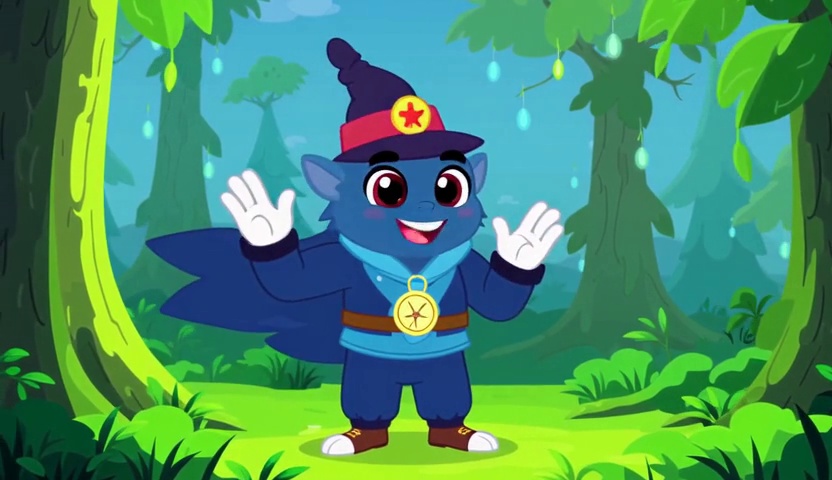} &
        \includegraphics[width=0.155\linewidth]{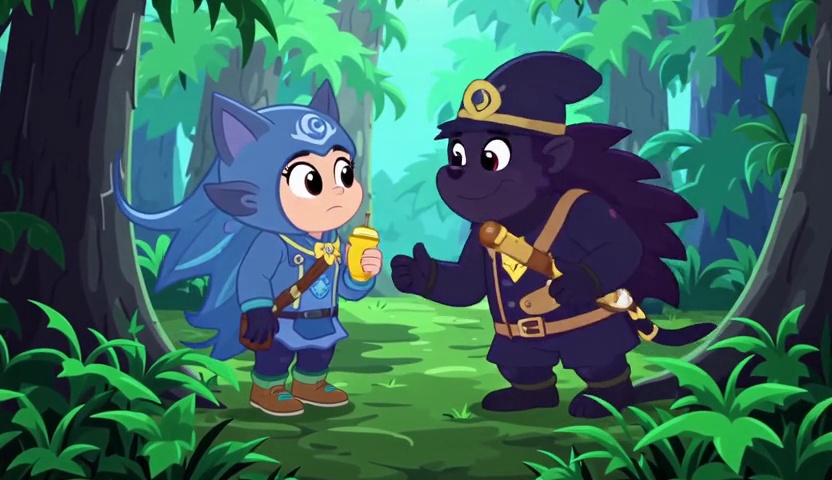} &
        \includegraphics[width=0.155\linewidth]{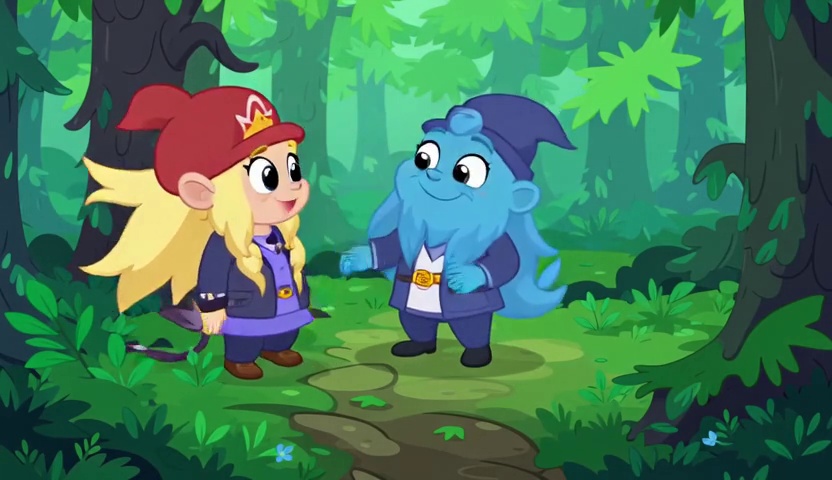} &
        \includegraphics[width=0.155\linewidth]{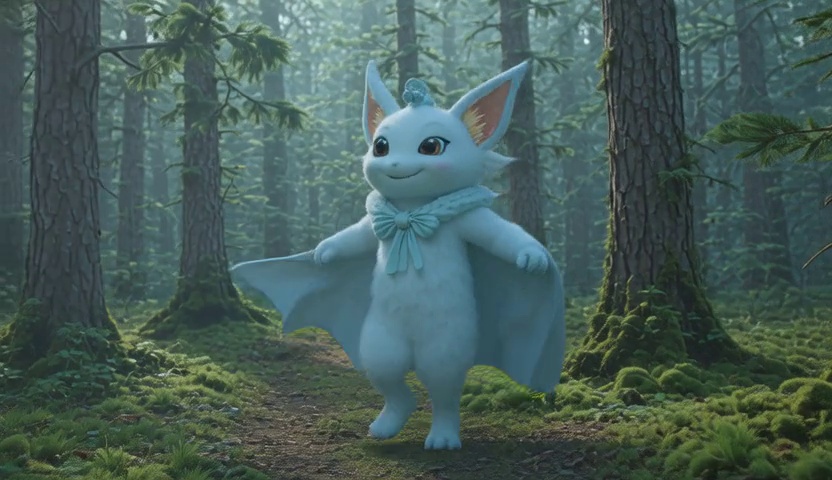} &
        \includegraphics[width=0.155\linewidth]{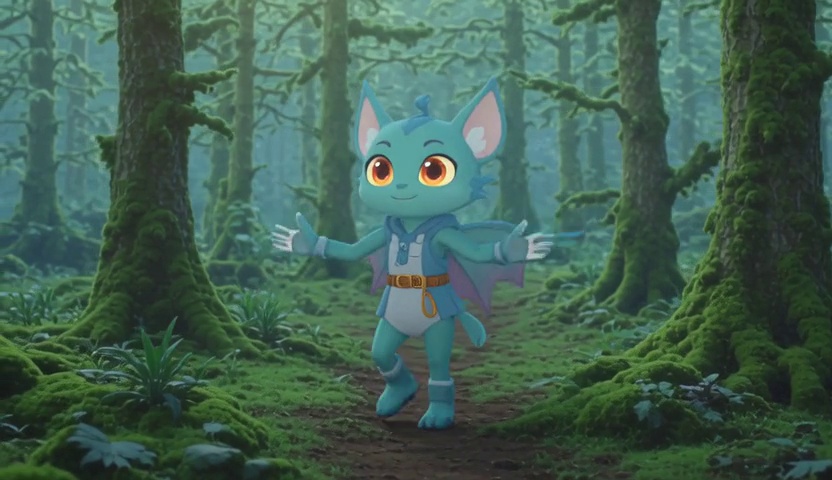} \\
        \includegraphics[width=0.155\linewidth]{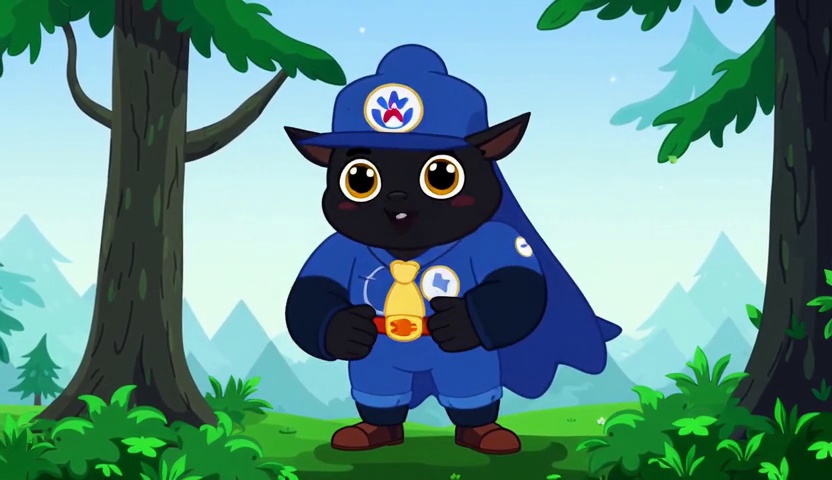} &
        \includegraphics[width=0.155\linewidth]{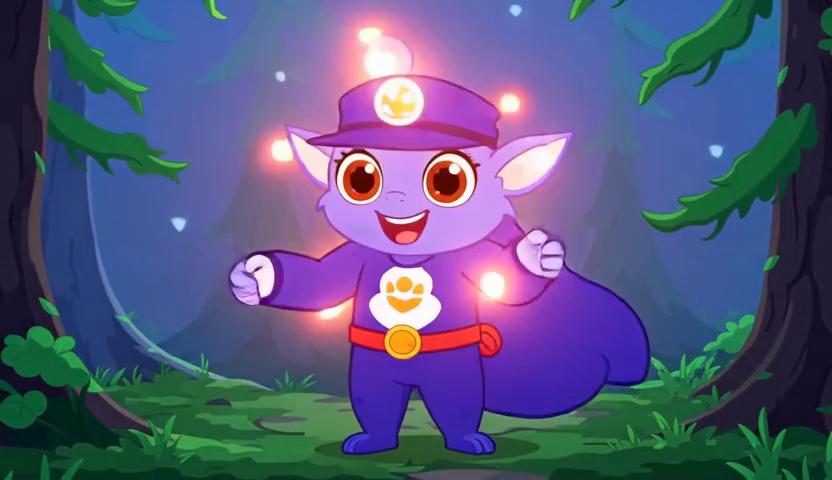} &
        \includegraphics[width=0.155\linewidth]{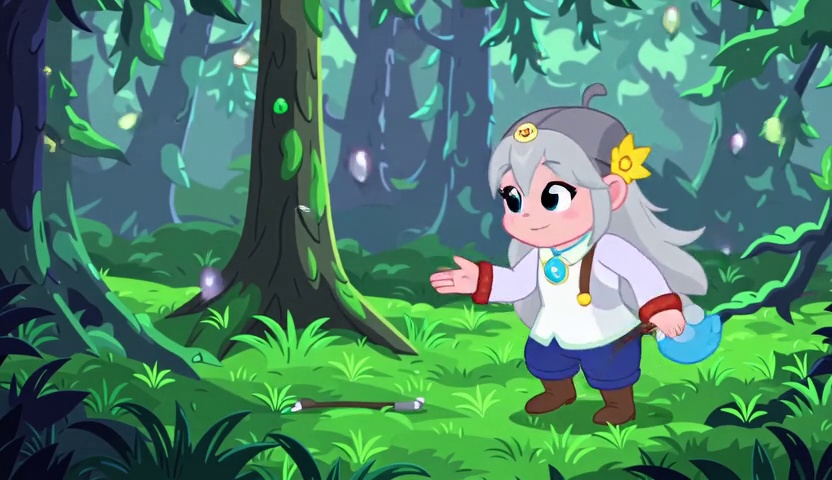} &
        \includegraphics[width=0.155\linewidth]{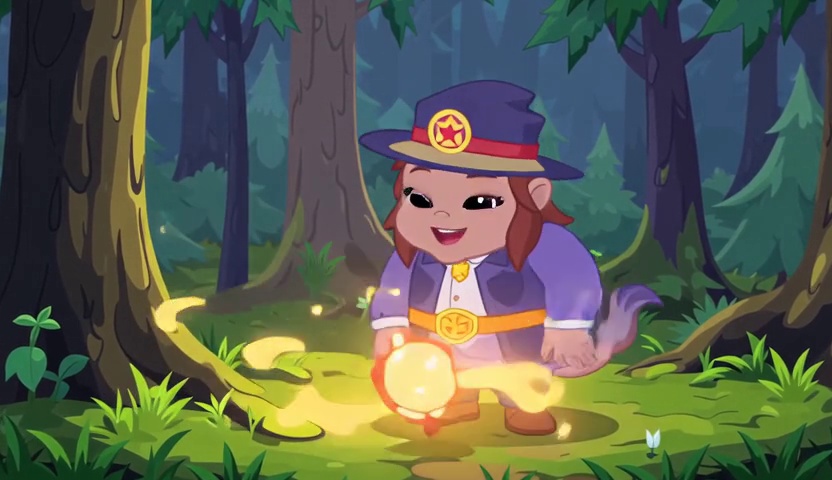} &
        \includegraphics[width=0.155\linewidth]{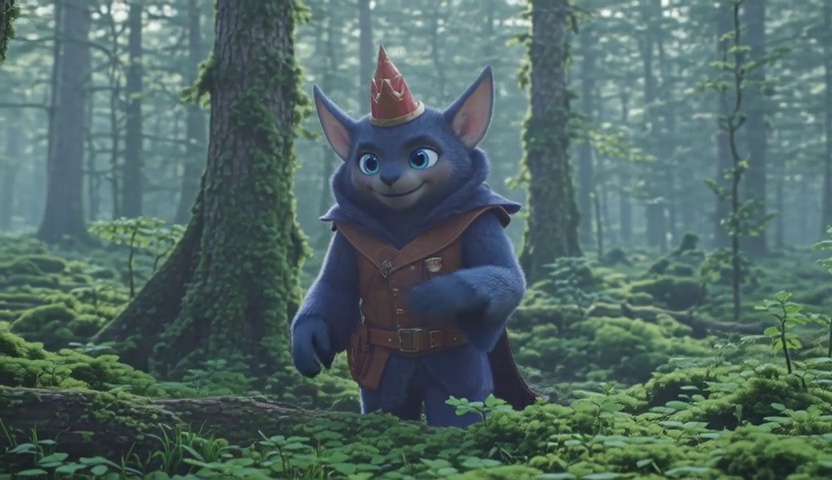} &
        \includegraphics[width=0.155\linewidth]{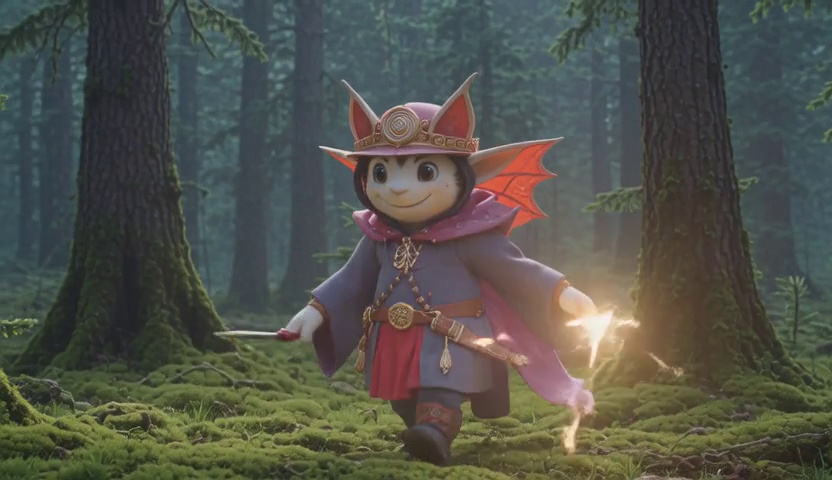} \\
        \includegraphics[width=0.155\linewidth]{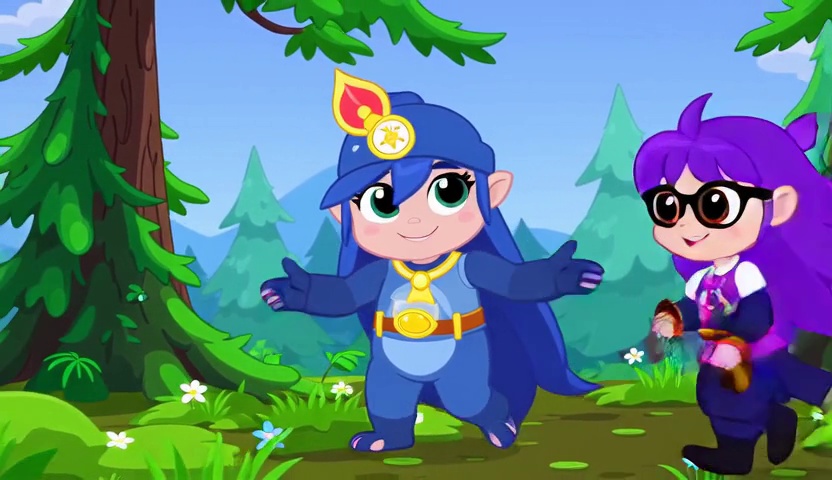} &
        \includegraphics[width=0.155\linewidth]{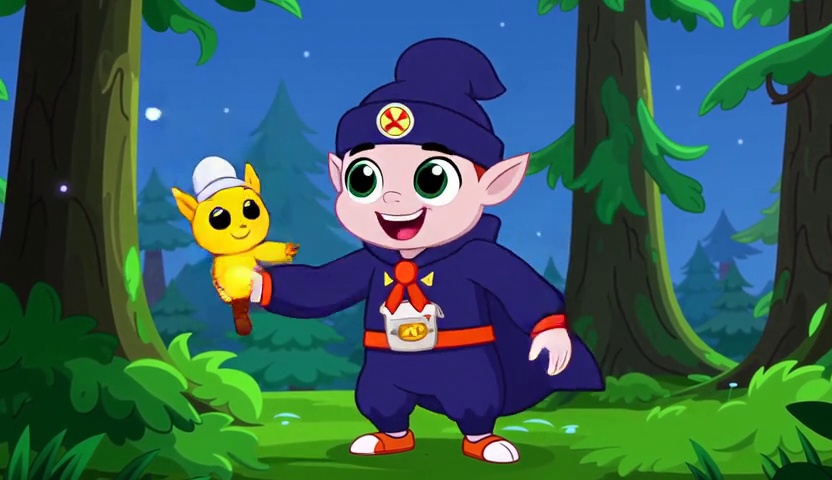} &
        \includegraphics[width=0.155\linewidth]{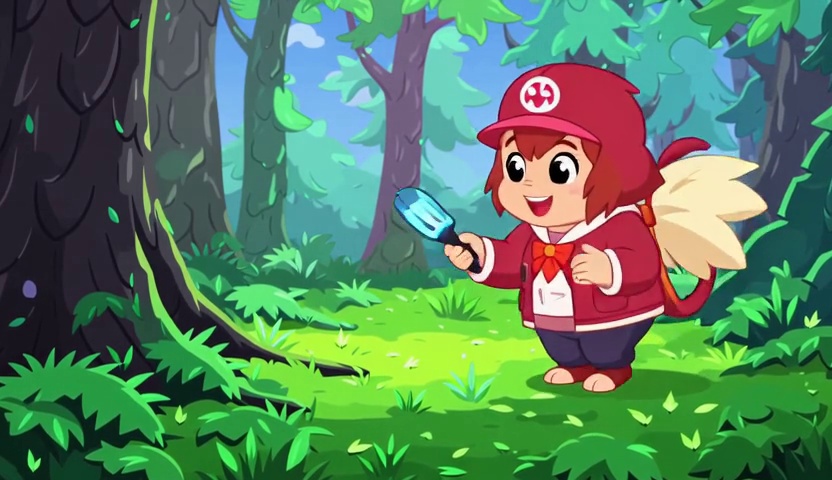} &
        \includegraphics[width=0.155\linewidth]{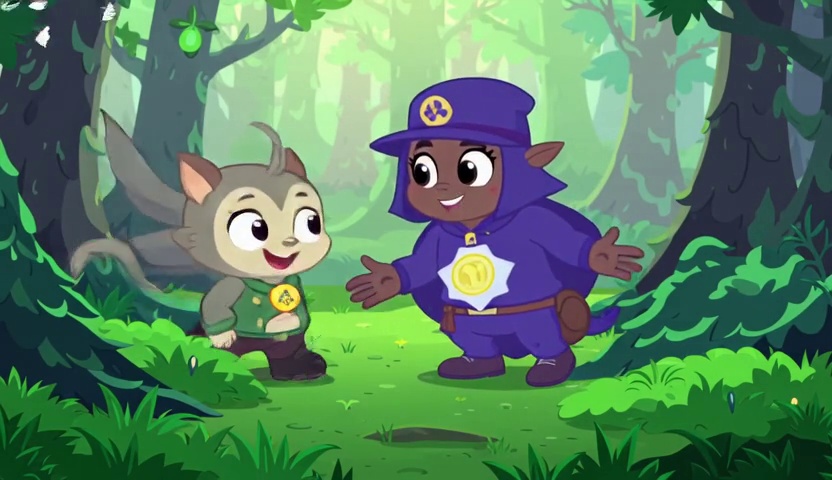} &
        \includegraphics[width=0.155\linewidth]{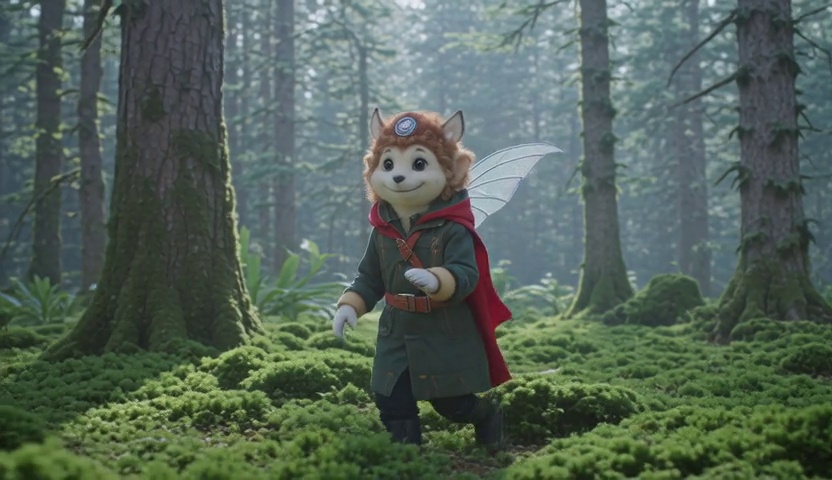} &
        \includegraphics[width=0.155\linewidth]{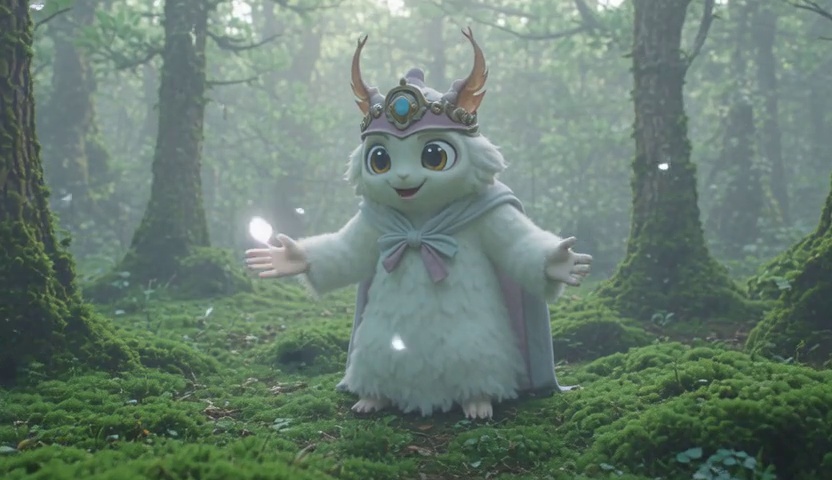} \\
        \includegraphics[width=0.155\linewidth]{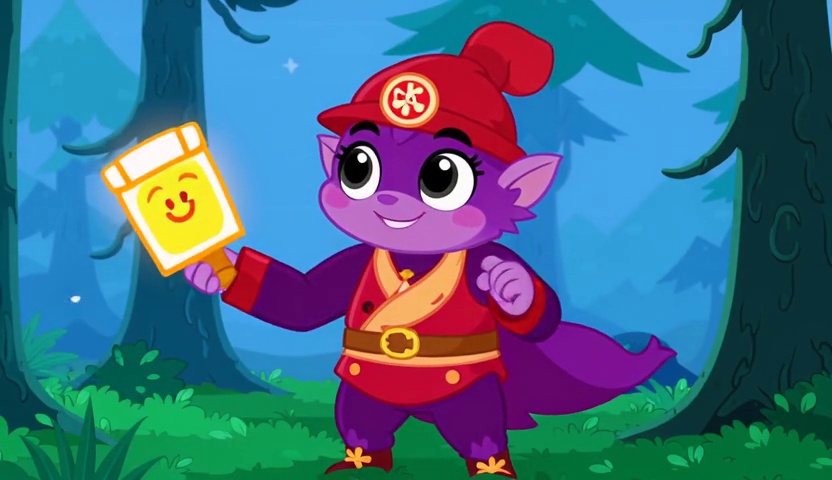} &
        \includegraphics[width=0.155\linewidth]{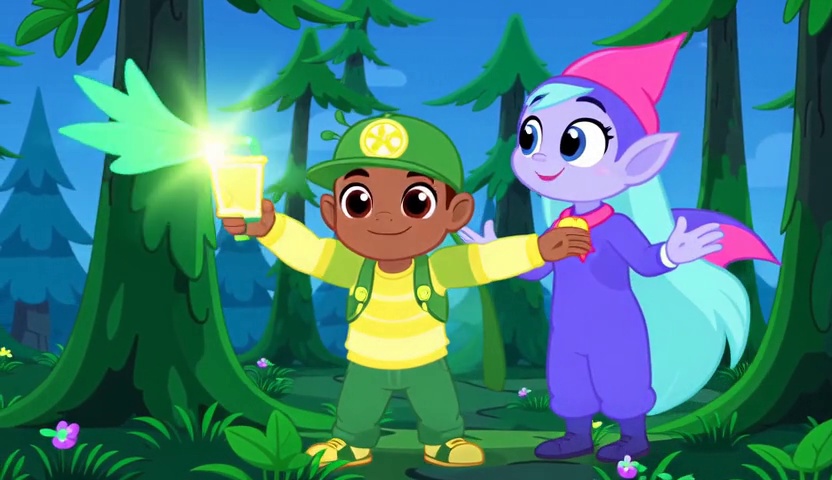} &
        \includegraphics[width=0.155\linewidth]{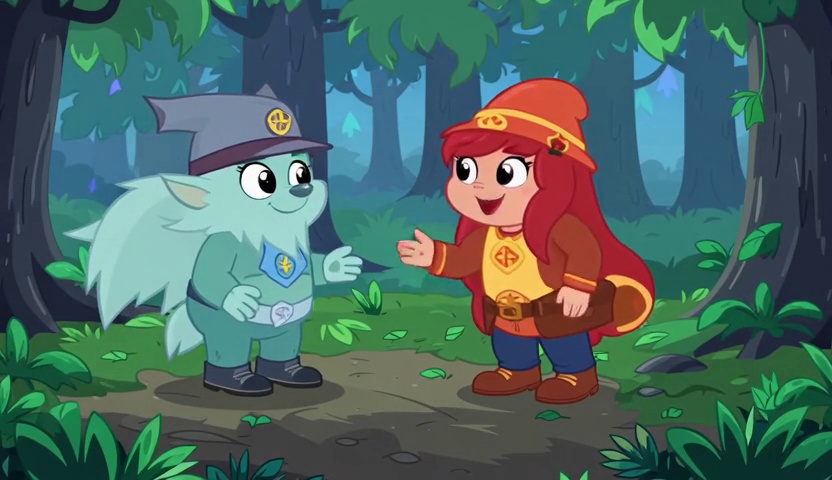} &
        \includegraphics[width=0.155\linewidth]{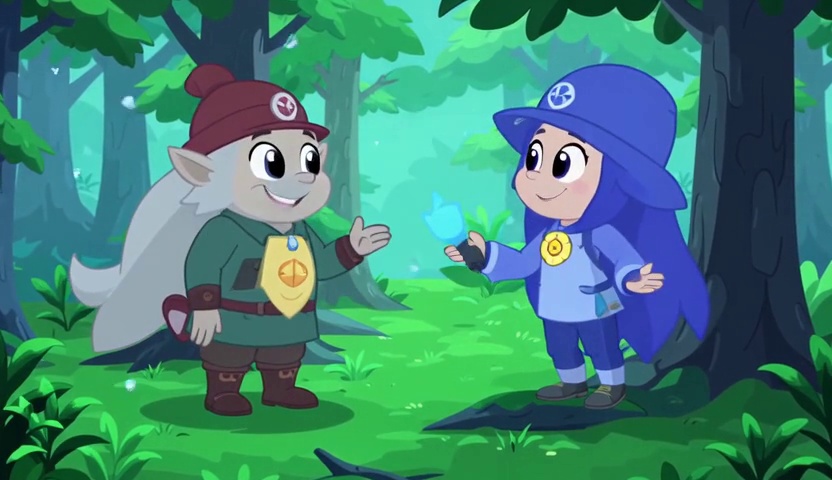} &
        \includegraphics[width=0.155\linewidth]{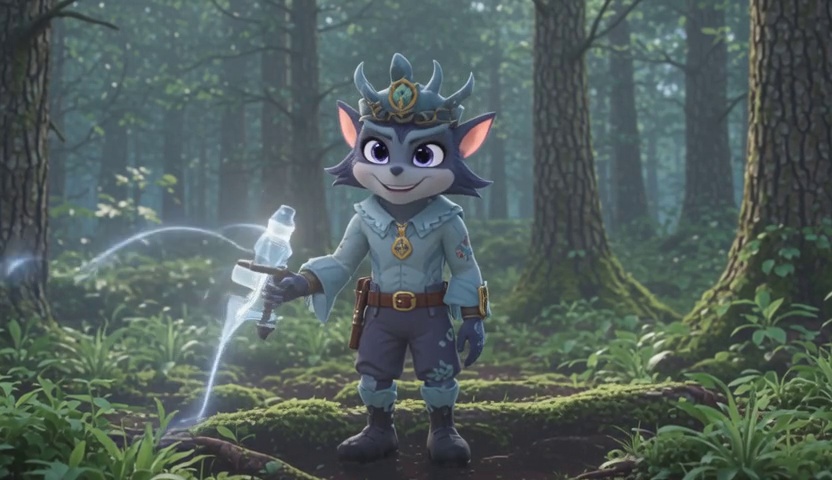} &
        \includegraphics[width=0.155\linewidth]{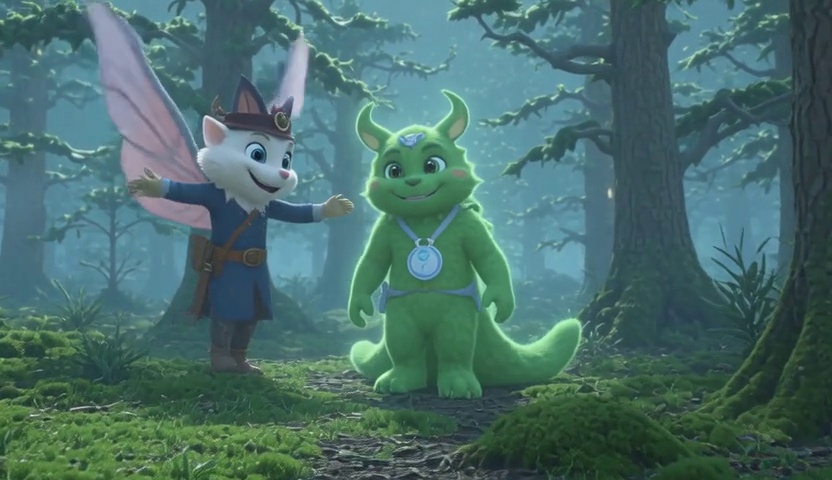} \\
    \end{tabular}
    \caption{\textbf{Diversity visualization across 8 random seeds for the same prompt}. Columns 1--2: DMD2; columns 3--4: DP-DMD; columns 5--6: Ours. Each row shows the middle frame of two videos generated with different seeds. Our method produces visibly more diverse outputs across seeds.}
    \label{fig:appendix_diversity}
\end{figure}
\subsection{Main Results for Auto}
\subsection{Additional Results for Image-to-Video Generation}
\begin{table*}[h]
\centering
\caption{{Quantitative comparison on image-to-video generation}. We evaluate the metrics from VBench on our curated ViPE test set. For all metrics, higher is better. The best result in each column is \textbf{bolded}. Our method consistently outperforms DMD2 and DP-DMD, and is even better than the teacher model.}
\setlength{\tabcolsep}{4pt}
\renewcommand{\arraystretch}{1.2}
\resizebox{\textwidth}{!}{%
\begin{tabular}{l|ccccccc|c}
\toprule
\textbf{Method} &
\shortstack{\textbf{Subject}\\\textbf{Consistency}} &
\shortstack{\textbf{Background}\\\textbf{Consistency}} &
\shortstack{\textbf{Aesthetic}\\\textbf{Quality}} &
\shortstack{\textbf{Temporal}\\\textbf{Flickering}} &
\shortstack{\textbf{Motion}\\\textbf{Smoothness}} &
\shortstack{\textbf{Image-to-Video}\\\textbf{Subject}} &
\shortstack{\textbf{Image-to-Video}\\\textbf{Background}} &
\textbf{Average} \\
\midrule
Teacher   & 0.9340 & 0.9468 & 0.6026 & 0.9694 & 0.9886 & 0.9781 & 0.9860 & 0.9151 \\
\midrule
DMD2    & 0.9340 & 0.9473 &\textbf{ 0.6095} & 0.9722 & 0.9892 & 0.9805 & 0.9868 & 0.9171 \\
DP-DMD   & 0.8372 & 0.8942 & 0.5621 & 0.9562 & 0.9823 & 0.9545 & 0.9655 & 0.8789 \\
Ours    & \textbf{0.9417} & \textbf{0.9538} & 0.6047 & \textbf{0.9814} & \textbf{0.9918} & \textbf{0.9825} & \textbf{0.9893} & \textbf{0.9207} \\
\bottomrule
\end{tabular}%
}
\label{tab:vbench-comparison-vipe}
\end{table*}

\textbf{We refer the readers to the supplementary material for full video comparisons}
\subsection{Additional Results for ablation study}
\begin{table}[h]
\centering
\caption{{Ablation on the effects of GAN loss in our DFD.} Bold indicates the better value. Removing the GAN loss yields results comparable to the model distilled with the GAN loss.
}
\setlength{\tabcolsep}{4pt}
\renewcommand{\arraystretch}{1.2}
\resizebox{0.85\textwidth}{!}{%
\begin{tabular}{lccccccc}
\toprule
\textbf{Model (DFD)} &
\shortstack{\textbf{Subject}\\\textbf{Consistency}} &
\shortstack{\textbf{Background}\\\textbf{Consistency}} &
\shortstack{\textbf{Temporal}\\\textbf{Flickering}} &
\shortstack{\textbf{Motion}\\\textbf{Smoothness}} &
\shortstack{\textbf{Dynamic}\\\textbf{Degree}} &
\shortstack{\textbf{Aesthetic}\\\textbf{Quality}} &
\shortstack{\textbf{Imaging}\\\textbf{Quality}} \\
\midrule
w/o GAN  & \textbf{0.9690 }         & 0.9620 & 0.9785 & 0.9899 & \textbf{0.5000   }       & 0.7194          & \textbf{0.7452} \\
w GAN   &  0.9666 & \textbf{0.9625 }         & \textbf{0.9831 }         & \textbf{0.9912    }    & 0.3750       & \textbf{ 0.7213  } &0.7210      \\
\bottomrule
\label{tab: ablate gan}
\end{tabular}%
}
\label{tab:remove_gan_comparison}
\end{table}

\begin{table}[h]
\centering
\caption{{Ablation on the weights in Eq.~\ref{eq: practical update gradient}.} The results show no clear difference between the two choices of $w$.}
\setlength{\tabcolsep}{4pt}
\renewcommand{\arraystretch}{1.2}
\resizebox{0.85\textwidth}{!}{%
\begin{tabular}{lccccccc}
\toprule
\textbf{Model} &
\shortstack{\textbf{Subject}\\\textbf{Consistency}} &
\shortstack{\textbf{Background}\\\textbf{Consistency}} &
\shortstack{\textbf{Temporal}\\\textbf{Flickering}} &
\shortstack{\textbf{Motion}\\\textbf{Smoothness}} &
\shortstack{\textbf{Dynamic}\\\textbf{Degree}} &
\shortstack{\textbf{Aesthetic}\\\textbf{Quality}} &
\shortstack{\textbf{Imaging}\\\textbf{Quality}} \\
\midrule
$w=\frac{1}{2}$  & \textbf{0.9691   }   & \textbf{0.9661} & 0.9830 & 0.9907 & \textbf{0.5625}        & 0.7135        & \textbf{0.7457 }\\
 $w=1$   & 0.9666 & 0.9625         & \textbf{0.9831}         & \textbf{0.9912}    & 0.3750       & \textbf{0.7213  }  & 0.7210      \\
\bottomrule
\end{tabular}%
}
\label{tab:hybrid_comparison}
\end{table}

\clearpage
\subsubsection{Text Prompts}
\label{app sub sub: text prompts for t2v}
We show part of the text prompts here:

\begin{tcolorbox}[
    colback=gray!5!white,   
    colframe=gray!75!black, 
    title=Example Prompts,  
    arc=3mm,                
    breakable               
]

\begin{itemize}
    \item Pikachu and Eevee dancing joyfully together on a dirt path in a bright, sunlit forest.
    
    \item Cheerful yellow cartoon dragon with green spikes and colorful wings, standing on a bright grassy hill.
    
    \item SpongeBob SquarePants dancing energetically inside his pineapple house, surrounded by floating bubbles.
    
    \item Cartoon-style family of polar bears playing on a floating ice floe.
    
    \item Animated squirrel gathering acorns in a colorful autumn forest.
    
    \item Anime-style mischievous cat batting at a ball of yarn in a cozy living room.
    
   \item Animation of a graceful fox running through a snowy landscape.
    \item Cartoon-style happy dog playing fetch on a bright sandy beach.
\item Animated traditional Japanese village at sunset with glowing paper lanterns and falling cherry blossoms.
\item Animated misty village with green-roofed wooden houses, a cobblestone path, and vibrant orange flowers.
\item An animated princess and men in tuxedos interacting cheerfully by a grand clock tower at night.
\item Cartoon-style elaborate underwater palace made of shells and coral.
\item Animated complex European castle perched on a rugged, mist-shrouded mountain peak.
\item Anime-style bustling Asian street market with colorful awnings, glowing signs, and dense pedestrian traffic.
\item Cartoon-style quirky, multi-level treehouse with suspension bridges and tire swings in a forest.
\item Animation of a historic lighthouse on a rocky coast battling powerful stormy waves.
\item Anime-style Sailor Moon walking down a neon city street, looking back over her shoulder with a smile.
\item Cheerful snowman standing on a snowy roof, looking down at a lit Christmas tree in a town square.
\item Anime close-up of an angry girl with pink twin-tails, golden horns, and a dark purple outfit.
\item Anime scene of a grinning Naruto holding ramen, standing next to a smiling girl with lavender hair.
\item Cartoon blonde mermaid with a white tiara, swimming happily through a vibrant coral reef.
\item Three animated women excitedly examining a map at a table in an ornate room.
\item A cartoon woman smiling and sipping coffee from a red cup at a sunny, cozy café.
\item A romantic animated couple gazing at each other under a Van Gogh-style swirling starry night sky.
\item Two animated princesses having a magical outdoor tea party among lush flowers and glowing sparkles.
\item Two animated princesses dancing joyfully on a palace terrace in the rain under a vibrant rainbow.
\item An animated princess pointing enthusiastically toward a seated queen at a magical royal banquet.
\item An animated princess pointing enthusiastically toward a seated queen at a magical royal banquet.
\item Animation of a street artist creating a detailed vibrant mural on a brick wall.
\item An animated woman in a blue gown gazing from a rose-adorned balcony overlooking a fairy-tale city.
\item A cheerful cartoon princess gesturing joyfully in a vibrant, sunny garden with a distant palace.
\item A tearful cartoon princess in a starry gown looking up at the night sky, with a majestic castle behind her.
\item Cartoon-style towering waterfall plunging into a clear turquoise pool surrounded by jungle foliage.
\item Animation of a vast desert with towering sand dunes stretching to the horizon under a blazing sun.
\item Anime-style dense, ancient forest where beams of sunlight filter dramatically through the high canopy.
\item Cartoon-style bustling cityscape viewed from a high rooftop, with lights blinking on as twilight descends.
\item Animation of a tranquil lake perfectly reflecting the purple and orange colors of a vibrant dawn.
\item Cartoon pig chef wearing a tall white hat, proudly holding a strawberry cake in a modern kitchen.
\item Cartoon-style towering stack of pancakes dripping with maple syrup and melting butter in a cozy diner.
\item Animation of a detailed chef’s market stall with rows of colorful fresh fruits and vegetables.
\item Anime-style detailed close-up of a steaming hot bento box full of intricate, colorful food.
\item \item Cartoon-style conveyor belt with tiny, perfectly formed sushi plates moving past happy diners.
\item Animated bakery window display filled with intricate pastries, artisanal bread, and glowing warmth.
\item Cartoon-style colorful ice cream truck with children lining up on a hot summer day.
\item Anime-style traditional tea ceremony performed in a peaceful garden with gentle steam rising from the cups.
\item Cartoon-style bright red vintage steam train chugging across rolling green hills under a bright sky.
\item Animated futuristic spaceship smoothly landing on a alien planet with purple vegetation.
\item Whimsical stop-motion style yellow submarine exploring deep, glowing underwater ruins.
\item Anime depiction of a sleek sports car speeding through a tunnel illuminated by passing neon lights.
\item Cartoon-style biplane flying dynamic loop-the-loops amidst fluffy, stylized white clouds.
\item Animated pirate ship with detailed sails navigating a vast, sun-drenched ocean at sunset.
\item Cartoon delivery scooter expertly navigating a bustling, rainy city street.
\item Anime-style flying motorcycle soaring high above a dense, green forest canopy.
\item Cartoon-style dusty antique clock shop with dozens of clocks ticking and chiming simultaneously.
\item Animated messy artist’s desk covered in used paintbrushes, tubes of paint, and open sketchbooks.
\item Whimsical stop-motion style pair of worn leather boots resting by a crackling fireplace.
\item \item Anime-style close-up of a well-traveled backpack adorned with colorful keychains and buttons.
\item Cartoon-style bookshelf packed with a vast, eclectic collection of books in a library corner.
\item Animation of a grand grandfather clock with its large pendulum slowly swinging back and forth.
\item Cartoon-style gardening tools—a trowel, gloves, and seed packets—resting on a rustic wooden bench.
\item Anime-style glowing crystal globe positioned on a wooden desk within a wizard’s library.
\item Animated messy child's playroom filled with various toys and building blocks scattered on the floor.
\item Anime-style depiction of a massive magical library where books fly autonomously between high shelves.
\item Animation of a whimsical city made of clouds floating in the sky, connected by glowing bridges of light.
\item Cartoon-style encounter with a friendly forest spirit in a glowing, ancient grove of mossy trees.
\item Anime-style powerful wizard casting a complex spell that generates swirling vortexes of magical energy.
\item Animation of a mischievous imp hiding behind colorful, glowing mushrooms within a deep cave.
\item Cartoon-style friendly dragon and a knight playing chess peacefully under the shade of a large tree.
\item Anime-style phoenix rising dramatically from a bed of glowing ashes with brilliant, fiery feathers.
\item Animation of an intricate city built entirely from colorful crystals that hum and pulse with light and energy.
\item Whimsical cartoon depiction of a celestial garden where stars grow like flowers on glowing vines.
\item Whimsical cartoon depiction of a celestial garden where stars grow like flowers on glowing vines.
\item A young woman with curly red hair laughs while spinning a yellow umbrella in slow motion as rain falls around her on a cobblestone street.
\item Close-up of an elderly man's weathered hands carefully tying a fly fishing lure beside a misty mountain stream at dawn.
\item Cinematic shot of a lone figure in a long coat walking across an endless salt flat under a vast purple twilight sky.
\item A barista pulls a perfect espresso shot, dark crema swirling into a white ceramic cup in soft morning light.
\item Drone footage soaring above a turquoise coastline where waves crash against jagged white cliffs.
\item A chef in a black apron flips a sizzling steak in a cast iron pan, flames leaping briefly upward.
\item Close-up of a honeybee landing on a sunflower, pollen visibly clinging to its legs as it crawls across the petals.
\item A vintage motorcycle rolls slowly down an empty desert highway at sunset, heat shimmering off the asphalt.
\item Cinematic tracking shot of a ballerina in a white tutu rehearsing alone in a sunlit wooden studio.
\item Time-lapse of storm clouds rolling rapidly across a wheat field as the wind bends the golden stalks.
\item A black cat slinks silently across a wet rooftop at night, eyes glowing under a streetlamp.
\item Macro shot of a single dewdrop sliding down a green blade of grass at sunrise.
\item A woman in a flowing red dress runs barefoot along a deserted beach, footprints quickly washed away by the surf.
\item First-person view from inside a glass elevator rising rapidly up a futuristic skyscraper at night.
\item A young boy lights sparklers in his backyard as twilight settles, his face illuminated in flickering gold.
\item Aerial shot of a small wooden boat carving a wake across a glassy lake surrounded by dense pine forest.
\item A potter's hands shape wet clay on a spinning wheel, water and clay slipping between their fingers.
\item Cinematic close-up of an astronaut's helmet visor reflecting the curve of Earth from low orbit.
\item A street musician plays a worn acoustic guitar on a busy sidewalk as commuters hurry past in motion blur.
\item Slow-motion shot of a basketball swishing through a chain net on an outdoor court at golden hour.
\item A fox steps cautiously into a snowy clearing at dusk, ears twitching as snowflakes drift down around it.
\item Wide shot of a Tibetan monastery perched on a cliff edge, prayer flags whipping in the high mountain wind.
\item A barista pours latte art into a cappuccino, viewed from directly above as a delicate rosetta forms.
\item A child runs through a sprinkler in a suburban backyard on a hot summer afternoon, laughing.
\item Macro footage of ink spreading and blooming through clear water, forming twisting black tendrils.
\item Cinematic shot of a saxophonist playing on a foggy New Orleans street corner, neon signs glowing behind him.
\item An elderly couple slow dances in their kitchen as warm afternoon light streams through gauzy curtains.
\item Drone shot pulling back from a single red farmhouse to reveal endless rolling vineyards stretching to the horizon.
\item A blacksmith hammers a glowing orange blade on an anvil, sparks flying with each strike.
\item Underwater shot of a sea turtle gliding gracefully through shafts of sunlight piercing a coral reef.
\item A businesswoman in a tailored suit strides confidently across a rain-soaked Tokyo crosswalk at night, neon reflecting in puddles.
\item Close-up of a vinyl record dropping onto a turntable, the needle settling as it begins to spin.
\item Time-lapse of a city skyline transitioning from sunset to night, windows lighting up across the buildings.
\item A surfer paddles into a massive wave at dawn, silhouetted against a pink and orange sky.
\item Cinematic shot of a horse galloping across a misty Scottish moor, hooves throwing up clumps of damp earth.
\item A jazz pianist's hands move expressively across the keys in a dimly lit speakeasy filled with soft amber light.
\item A baker dusts flour across a freshly braided loaf of bread on a wooden countertop in a sunlit kitchen.
\item Slow-motion shot of a tennis ball striking a racket, the strings flexing dramatically on impact.
\item A father teaches his young daughter to ride a bicycle along a tree-lined neighborhood street in autumn.
\item Aerial view of a fishing village at dawn, colorful boats returning to harbor across calm waters.
\item Macro shot of a butterfly slowly opening and closing its wings on a lavender stem in a summer breeze.
\item A man reads a leather-bound book in a worn armchair beside a crackling fireplace as snow falls past the window.
\item Cinematic shot of a train emerging from a tunnel into a sunlit mountain valley filled with wildflowers.
\item A surgeon in scrubs washes their hands at a hospital sink, water cascading over their fingers in clinical white light.
\item Close-up of raindrops striking a car windshield, the wipers sweeping rhythmically across the glass.
\item A florist arranges a bouquet of peonies and eucalyptus on a marble counter in a bright minimalist shop.
\item A lone hiker reaches a mountain summit at sunrise and stands silhouetted against the rising sun, arms outstretched.
\item Slow-motion shot of a glass shattering as it strikes a tile floor, shards spreading outward.
\item Drone footage following a red kayak winding through a narrow emerald-green river canyon.
\item A young man in a leather jacket leans against a vintage car at a 1950s-style diner, neon signs buzzing overhead.
\item Cinematic shot of two dancers performing a tango in an empty warehouse lit by a single overhead bulb.
\item A child's hand reaches up to touch the glass of an aquarium as a stingray glides past on the other side.
\item Wide shot of a hot air balloon festival at dawn, dozens of colorful balloons inflating across a misty field.
\item A glassblower shapes molten glass at the end of a long iron rod, the orange glow lighting their focused face.
\item Close-up of espresso dripping into a clear glass mug, steam curling upward in soft window light.
\item A street artist spray-paints a vibrant mural on a brick wall in an alley, bystanders watching in the background.
\item Slow-motion shot of a diver leaping from a high cliff into a deep blue ocean below.
\item A woman pours steaming tea from a cast iron kettle into a delicate porcelain cup at a wooden table.
\item Aerial shot of an autumn forest in full color, a winding road cutting through a fiery red and orange canopy.
\item A construction worker welds a steel beam high above a city skyline, sparks raining down against the sunset.
\item Cinematic shot of a vintage typewriter on a desk by a rainy window, paper slowly being pulled from the carriage.
\item A skateboarder grinds along a metal handrail in an urban plaza, captured in slow motion at golden hour.
\item A grandmother teaches her granddaughter to roll dumplings at a flour-dusted kitchen counter, both smiling.
\item Underwater shot of a school of silver fish parting around a diver swimming through a sunlit kelp forest.
\item Time-lapse of a flower blooming from a tight bud to a full open blossom over the course of a day.
\item A scientist examines a glowing blue liquid in a test tube inside a softly lit laboratory.
\item Cinematic wide shot of a desert caravan crossing massive sand dunes at sunset, long shadows trailing behind.
 \item A coffee shop window steams up from the inside as rain streaks down the outside, a person reading at the counter.
\item Slow-motion shot of a flock of birds taking off from a marsh at dawn, wings beating against pink mist.
\item A young woman pins a Polaroid photo onto a string of lights in her bedroom, the photo gently swinging.
 \item Drone footage gliding low over a frozen lake at dusk, a single ice skater carving graceful arcs across its surface.
\end{itemize}

\end{tcolorbox}

\subsection{Societal Impacts}
Our work on distilling video diffusion models presents both positive and negative potential societal impacts. On the positive side, our method significantly reduces the computational cost and inference time required for high-fidelity video generation. This democratizes access to creative tools, allowing users with limited compute resources to generate videos, while also reducing the carbon footprint and energy consumption associated with deploying large-scale generative models. On the negative side, accelerating video generation inherently scales the risks associated with the base models. Faster generation can facilitate the rapid creation of deepfakes, misinformation, and malicious content. While our theoretical framework improves algorithmic efficiency, it does not inherently prevent these misuses. Safe deployment of these distilled models in real-world applications will require coupling them with robust safety filters, provenance tracking, and watermarking techniques.

\end{document}